\setlist[enumerate]{label=(\arabic*)}
\algnewcommand{\LineComment}[1]{\State \(\triangleright\) #1}
\newcommand{\gitLink}{\url{github.com/lava-lab/RATM}}
\DeclareMathOperator{\E}{\mathbb{E}}
\DeclareMathOperator*{\arginf}{arg\,inf}
\DeclareMathOperator{\mv}{MV}
\newcommand{\RACNO}{\text{RAM-MDP}\xspace}
\newcommand{\RACNOs}{\text{RAM-MDPs}\xspace}
\newcommand{\RAM}{\text{RAM-MDP}\xspace}
\newcommand{\RAMs}{\text{RAM-MDPs}\xspace}
\newcommand{\mvr}{\text{MV}_{\text{RATM}}\xspace}
\newcommand{\ATM}{{\text{ATM}}\xspace}
\newcommand{\uATM}{{\text{RATM}}\xspace}
\newcommand{\ucATM}{{\text{MLATM}}\xspace}
\newcommand{\ucATMpes}{{$\ucATM\emph{-pes}$}\xspace}
\newcommand{\ucATMopt}{{$\ucATM\emph{-opt}$}\xspace}
\newcommand{\ucATMavg}{{$\ucATM\emph{-avg}$}\xspace}
\newcommand{\ATMavg}{{$\ATM\emph{-avg}$}\xspace}
\newcommand{\ATMpes}{{$\ATM\emph{-pes}$}\xspace}
\newcommand{\CR}{ {\text{ML}} \xspace}
\newcommand{\CRMDP}{ {\text{CRMDP}} \xspace}
\newcommand{\env}[1]{\textsc{#1}}
\newcommand{\uMVenv}{\env{lucky-unlucky}\xspace}
\newcommand{\uMVtwoenv}{\env{a-b}\xspace}
\newcommand{\drone}{\env{drone}\xspace}
\newcommand{\snakemaze}{\env{snakemaze}\xspace}
\newtheorem{theorem}{Theorem}
\newtheorem*{lemma*}{Lemma}
\crefname{observation}{Observation}{Observations}
\newtheorem{corollary}{Corollary}
\newtheorem*{corollary*}{Corollary}
\crefname{corollary}{Corollary}{Corollaries}
\theoremstyle{definition}
\newtheorem{definition}{Definition}
\crefname{finding}{Result}{Results}
\newtcolorbox{leftvrule}[1][]{colback=white,  boxrule=0pt, boxsep=0pt, breakable, enhanced jigsaw, borderline west={1.5pt}{0pt}{black},
before skip=5pt,after skip=5pt,
#1}
\tikzset{elliptic state/.style={draw,ellipse}}
\definecolor{InkBlue}{RGB}{166,206,227} %
\definecolor{InkGreen}{RGB}{178,223,138} %
\definecolor{InkRed}{RGB}{251,154,153} %
\definecolor{InkYellow}{RGB}{253,191,111} %
\definecolor{InkGrey}{HTML}{bcc2c8} %
\tikzset{tightR/.style={diamond, inner sep=0cm, text width=0.7cm, align=center, draw=black, fill=white} }
\tikzset{tightS/.style={circle, inner sep=0cm, text width=0.7cm, align=center, draw=black, fill=white } }
\tikzset{tightBox/.style={rectangle, inner sep=0.1cm, align=center, draw=black, fill=white } }
\tikzset{tightP/.style={shape=diamond, minimum size=0.5cm, fill=black}}
\tikzset{help/.style={draw=none}}
\title{Robust Active Measuring under Model Uncertainty}
\author {
    Merlijn Krale\textsuperscript{\rm 1},
    Thiago D. Simão\textsuperscript{\rm 2},
    Jana Tumova\textsuperscript{\rm 3},
    Nils Jansen\textsuperscript{\rm 1,4}    
}
\begin{document}

\maketitle

\begin{abstract}
    Partial observability and uncertainty are common problems in sequential decision-making that particularly impede the use of formal models such as Markov decision processes (MDPs).
    However, in practice, agents may be able to employ costly sensors to \emph{measure} their environment and resolve partial observability by gathering information. 
    Moreover, imprecise transition functions can capture model uncertainty.
    We combine these concepts and extend MDPs to \emph{robust active-measuring MDPs (RAM-MDPs)}.
    We present an active-measure heuristic to solve RAM-MDPs efficiently and show that model uncertainty can, counterintuitively, let agent take fewer measurements.
    We propose a method to counteract this behavior while only incurring a bounded additional cost.
    We empirically compare our methods to several baselines and show their superior scalability and performance.
\end{abstract}

\section{Introduction}

Markov decision processes \citep[MDPs;][]{DBLP:books/wi/Puterman94} are a standard sequential decision-making model \citep{DBLP:journals/robotics/KormushevCC13,DBLP:journals/comsur/LeiTZLZS20,DBLP:journals/tits/SunbergK22}.
However, in MDPs, the decision-maker has full knowledge of the dynamics of the environment and its current state, which is often unrealistic.
Well-studied frameworks exist to relax these assumptions.
To represent model uncertainty, \emph{robust MDPs} \citep[RMDPs;][]{DBLP:journals/ior/NilimG05} extend MDPs by replacing transition probabilities with uncertainty sets.
To represent state uncertainty, \emph{partially observable MDPs} \citep[POMDPs;][]{DBLP:journals/ai/KaelblingLC98} extend MDPs with an observation function, which dictates how the agent gains information while interacting with the environment.

\emph{Active-measure} MDPs are a subset of the latter, where agents have direct control over when and how they gather information, which has an associated cost~\citep{DBLP:conf/ai/BellingerC0T21}.
For example, a drone may request information from a motion capture system which has costs related to communication (\cref{fig:DroneExample}).
Furthermore, this model can capture applications in predictive maintenance and healthcare, such as diagnostics~\citep{jimenez2022deterioration, DBLP:journals/csur/YuLNY23}.
In these applications, the cost or risk of gaining information needs to be weighed against the value of obtaining more information.

\begin{figure}[tb]
    \centering
    \includegraphics[width=0.6\columnwidth]{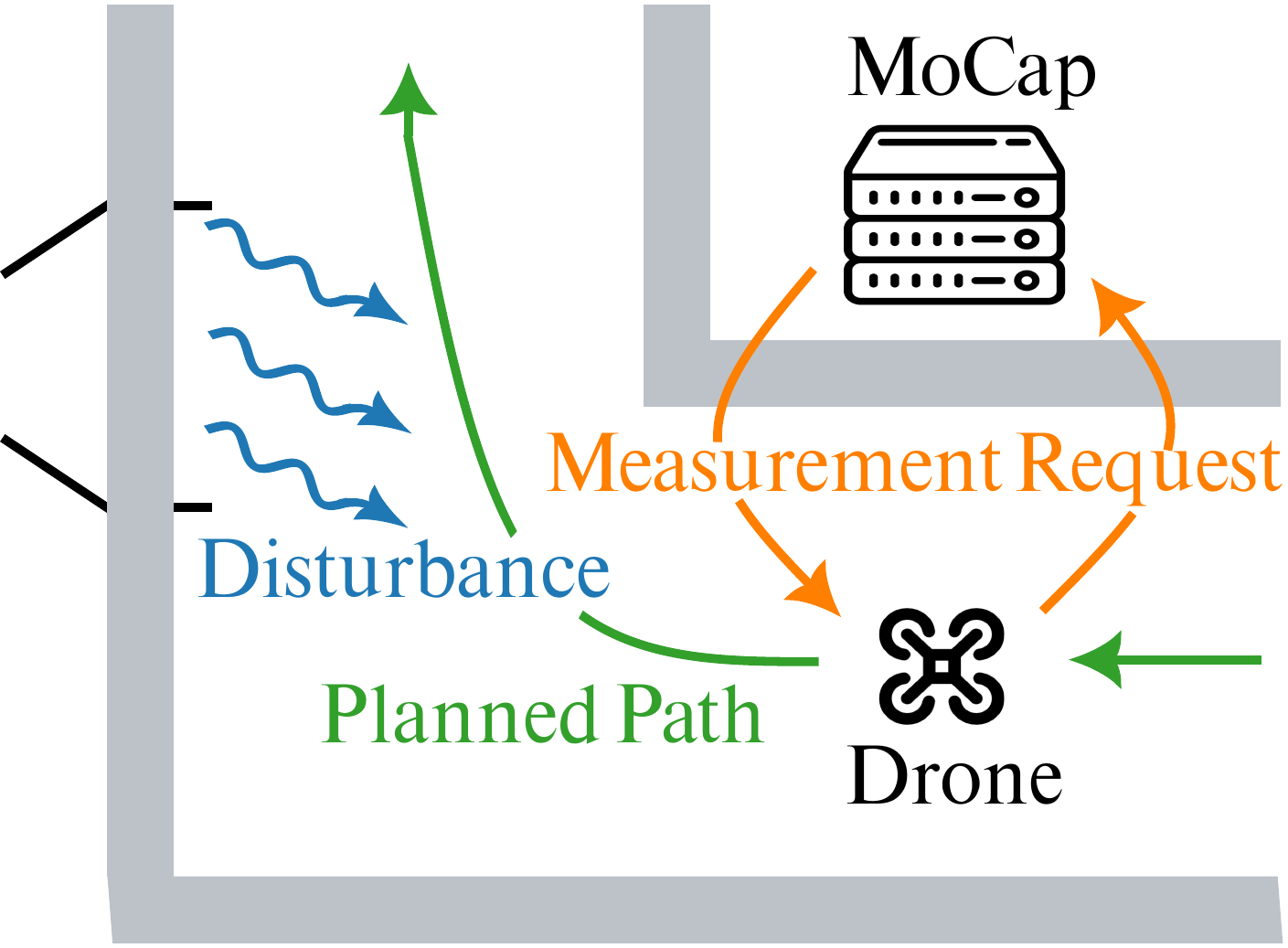}
    \caption{A motivating example.
    A drone has to plan a path through a corridor where (wind) disturbances introduce uncertainty in its position.
    An external Motion Capture (MoCap) system can provide the drone's exact position, but this uses some of its limited bandwidth.
    How should the risk of a collision be weighed against the cost of using this system?
    }
    \label{fig:DroneExample}
\end{figure}

Settings with both model and state uncertainty can be expressed as \emph{robust POMDPs} \citep[RPOMDPs;][]{@osogamiRobustPartiallyObservable2015}.
However, even though uncertain and partially observable settings have been studied extensively on their own, research on RPOMDPs has been limited in part due to their complexity.
Existing strategies for solving RPOMDPs are either exact but computationally expensive \citep{@osogamiRobustPartiallyObservable2015,@rasouliRobustPartiallyObservable2018}, or only consider policies with limited memory \citep{DBLP:conf/ijcai/Suilen0CT20,DBLP:conf/aaai/Cubuktepe0JMST21}.

Aiming to achieve better performance and scalability, this paper focuses on a subset of RPOMDPs with \emph{active measuring}, which we formally define as \emph{robust active-measuring MDPs} (\RAMs).
We make the counter-intuitive observation that high model uncertainty may discourage measuring in certain environments.
For solving a specific subset of \RAMs, we adopt a heuristic called \emph{act-then-measure} \citep[ATM;][]{DBLP:conf/aips/KraleS023} for standard active-measure environments in an uncertain setting.
This heuristic suggests partially ignoring future state uncertainty, which drastically decreases policy computation times.

Next, we propose \emph{measurement leniency}, a strategy to encourage measuring in settings with high model uncertainty.
This strategy allows the agent to make additional measurements when this would yield better results under a less pessimistic model.
We formalize this idea and prove that measurement lenient policies have a bounded lost return as compared to their fully robust counterpart.

We empirically compare both regular and measurement lenient variants of our algorithm on a number of environments.
Against a number of baselines, we demonstrate (1) the computational tractability of our method and (2) an increased robustness of policies.

\paragraph{Contributions.}
The main contributions of this work are:
\begin{enumerate*}
    \item defining \RAMs to represent active measuring in uncertain environments;
    \item analyzing the influence of model uncertainty on measuring behavior; 
    \item showing how the \emph{act-then-measure} heuristic can be used to efficiently solve a subset of \RAMs; and
    \item defining the \emph{measurement leniency} strategy to get better performance %
    in settings with high model uncertainty.
\end{enumerate*}

\section{Setting: \RAMs}
\label{sec:uACNO-MDPs}

\begin{figure}[t]
    \centering
    \begin{minipage}{.95\columnwidth}
        \centering\resizebox{1\linewidth}{!}{\pgfdeclarelayer{background}
\pgfdeclarelayer{foreground}
\pgfsetlayers{background,main,foreground}

\tikzstyle{quantity}=[draw, fill=blue!20, text width=5em, 
    text centered, minimum height=2.5em,drop shadow]
\tikzstyle{ann} = [above, text width=5em, text centered]
\tikzstyle{wa} = [sensor, text width=10em, fill=red!20, 
    minimum height=6em, rounded corners, drop shadow]
\tikzstyle{sc} = [sensor, text width=13em, fill=red!20, 
    minimum height=10em, rounded corners, drop shadow]

\def\dist{1.5cm}

\begin{tikzpicture}
    \node [tightS, fill=InkGreen] (pi) {$\pi$};
    \node [tightS, fill=InkGreen] (b) [right=0.25cm of pi] {$b_t$};

    \node [tightBox, fill=InkGrey] (N) [right=\dist + 0.5cm of b] {Nature};
    \node [tightBox, fill=InkGrey] (P) [right= 0.5cm of N] {$P {\in} \mathcal{P}$};
    \node [tightS, fill=InkGreen]  (s) [right=0.5cm of P] {$s_t$};
    
    \node [coordinate] (pib) [above right= 0.5cm and 0.25cm of pi] {}; %
    \node [coordinate] (bdown) [below=0.5 of b] {};
    \node [coordinate] (bdownright) [below right=0.6cm and 1cm of bdown] {};
    
    \node [coordinate] (Nup) [above=1.25cm of N, xshift = -0.75cm] {};
    \node [coordinate] (Nleftup) [left=1cm of Nup] {};
    \node [coordinate] (sdown) [below=0.5cm of s] {};
    \node [coordinate] (sdownleft) [below left=0.6cm and 1cm of sdown] {};

    \begin{scope}[on background layer]
        \node [shape=rectangle, minimum height=2cm, minimum width=2cm, fill=InkGrey, draw=black, xshift=0.5cm, ] (agent) {};
        \node [shape=ellipse, minimum height=2.5cm, minimum width=4.75cm, fill=InkBlue, draw=black] (env) [right= \dist of agent] {};
        \node [help] (envlabel) [above=0cm of env]  {RAM-MDP};
        \node [help] (alabel) at (agent |- envlabel) {Agent};
    \end{scope}

    \draw (pi) to[out=90, in=180] (pib) to[out=0, in=90]  (b);
    \draw (pib) to[out=90, in=180] (Nleftup) to[out=0, in=180] node[below, pos=0.2] {$\tilde{a}_t{=}(a_t,m_t)$} (Nup);
    \draw [->](Nup) to[out=0, in=90, ->] node [left, pos=0.85] {$\tilde{a}_t$} (N);
    \draw [->](Nup) to[out=0, in=90] node [left, pos=0.90, ->] {$a_t$}  (P);
    \draw [->](s) to[out=270, in = 90] (sdown) to[out=270, in=0] (sdownleft) to[out=180, in=0] node [above, pos=0.85] {$o_t {\sim} O(s_{t+1},m_t)$} node [below, pos=0.85] {$\tilde{r}_t {=} R(s_t,a_t) {-} C(m_t)$} (bdownright) to[out=180, in=270] (bdown);

    \path[->]   (N) edge  (P)
                (P) edge [bend left=50] node [above] {$s_{t{+}1}$} (s)
                (s) edge [bend left=50] node [below] {$s_t$} (P)
                (b) edge [densely dashed] (N)
                (b) edge [loop below] node [left, pos=0.75] {$b_{t{+}1}$} (b);
            ;

\end{tikzpicture}}
    \end{minipage}
    \caption{
    A visualization of agent-environment interactions in \RAMs, as explained in \cref{sec:uACNO-MDPs}.
    }
    \label{fig:uACNOInteractions}
\end{figure}
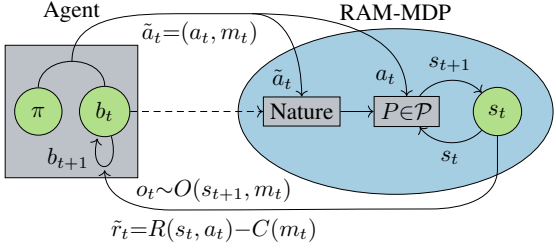
In this section, we formally define \RAMs as the combination of the RMDP \citep{DBLP:journals/ior/NilimG05} and active-measure \citep{DBLP:conf/ai/BellingerC0T21} frameworks:

\begin{definition}
    A \emph{robust active-measure MDP} (\RAM) is a tuple $\mathcal{M} {=} (S, R, \gamma, \tilde{A} {=} A {\times} M, \mathcal{P}, \mathcal{I}, O, \Omega, C)$, with state space $S$, reward function $R \colon S \times A \rightarrow \mathbb{R}$, and discount factor $\gamma$.
    $\tilde{A}$ is the set of \emph{actions}, which consists of pairs of \emph{control} and \emph{measurement} actions $\tilde{a} {=} \langle a, m \rangle {\in} A {\times} M$.
    Control actions affect the environment, while measurement actions affect what information agents gain about their current state.
    The dynamics are given by an \emph{uncertain transition function} $\mathcal{P} \colon S {\times} A {\times} S {\rightarrow} \mathcal{I}$, which gives an interval from the \emph{interval set} $[p_\text{min}, p_\text{max}] \in \mathcal{I}$ (with $ 0 {\leq} p_\text{min} {\leq} p_\text{max} {\leq} 1$) for each transition.
    Like POMDPs, \RAMs have an \emph{observation function} $O \colon S {\times} M {\times} \Omega {\rightarrow} \mathbb{R}$ with $\Omega$ the observation space.
    Lastly, the cost of measuring is given by $C \colon M {\rightarrow} \mathbb{R}$.
\end{definition}

\RAMs are a subset of RPOMDPs, with the additional property that the action space is factorized into control- and measuring actions, and $O$ and $\mathcal{P}$ are independent of these respective action types.
Furthermore, \RAMs collapse to RMDPs if all measurements have cost $0$ and a unique observation for all states, and to MDPs if, in addition, $p_\text{min}{=} p_\text{max}$
for all intervals in $\mathcal{I}$, and $\mathcal{P}$ forms a valid probability distribution for all state-action pairs.

Inspired by \citet{DBLP:conf/nips/NamFB21}, we assume that measurements are \emph{complete} and \emph{noiseless}.
Intuitively, this means agents only have two measurement options: they either take a measurement that returns full state information, or they take no measurement.
In this case, the observation set has the form $\Omega \colon S {\cup} \{\bot\}$, and the observation function is deterministic, with $O \colon S {\times} M {\rightarrow} \Omega$, such that $\forall s \colon O(\bot | s,0) {=} 1$ and $O(s | s,1) {=} 1$.
Lastly, we assume $C(0) {=} 0$ and denote $C(1) {=} c$.

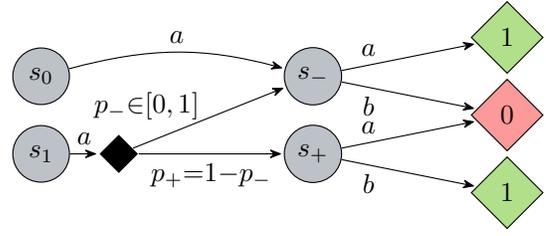
\begin{figure}[t]
    \centering
    \begin{minipage}{.85\columnwidth}
        \resizebox{1\linewidth}{!}{\begin{tikzpicture}[
    shorten >=1pt,
    node distance=1cm and 3.5cm,
    on grid,
    >={Stealth[round]},
    every state/.style={fill=InkBlue}
    ]
    \node[tightS, fill=InkGrey]  (s_0)                                         {$s_0$};
    \node[tightS, fill=InkGrey]          (s_1) [below=of s_0]     {$s_1$};

    \node[tightS, fill=InkGrey]          (s_-) [right= of s_0]                       {$s_{-}$};
    \node[tightP]    (p1)  [right= 1cm of s_1] {};
    \node[tightS, fill=InkGrey]        (s_+) [right = of s_1]                      {$s_{+}$};
    \node[tightR, fill=InkGreen ]        (R+1) [above right = 0.5cm and 2.5cm of s_-]  {$1$};
    \node[tightR, fill=InkRed]           (R-)  [below right = 0.5cm and 2.5cm of s_-]  {$0$};
    \node[tightR, fill=InkGreen ]        (R+2) [below right = 0.5cm and 2.5cm of s_+]  {$1$};

  \path[->] (s_0)   edge [above, bend left=15]               node {$a$}      (s_-)
            (s_1)   edge [above]               node {$a$}      (p1)
            (p1)    edge [below]               node {$p_{+}{=}1{-}p_{-}$} (s_+)
                    edge []          node [left, pos=0.7, xshift=-0.4cm] {$p_{-} {\in} [0,1]$} (s_-)
            (s_-)   edge [above]               node [pos=0.2] {$a$} (R+1)
            (s_-)   edge [below]               node [pos=0.2] {$b$} (R-)
            (s_+)   edge [above]               node [pos=0.2] {$a$} (R-)
            (s_+)   edge [below]               node [pos=0.2] {$b$} (R+2)
            ;
\end{tikzpicture}}
    \end{minipage}
    \caption{For initial belief $b$ over $s_0$ and $s_1$ in this RPOMDP, the expected return is minimized if, for the next belief $b'$, probabilities of being in state $s_-$ and $s_+$ are equal.
    The corresponding values of $p_-$ and $p_+$ depend on $b(s_0)$ and $b(s_1)$, thus, the worst-case transition function is belief-dependent.
    }
    \label{fig:PrmdpFail}
\end{figure}
Agent-environment interactions for \RAMs can be viewed as a two-player game between the agent and `nature', as visualized in \cref{fig:uACNOInteractions}.
Starting from an initial state $s_0$, for each time-step $t$ the agent chooses an action pair $\tilde{a}_t {=} \langle a_t, m_t \rangle$ to execute according to some policy~$\pi$.
Based on the chosen action pair, the current state, and the agent's current belief, nature picks a valid probability function $P(\cdot|s_t,\tilde{a}_t)$ from the uncertainty set, i.e., subject to the constraint that $\forall s'\colon P(s'|s_t, \tilde{a}_t) {\in} \mathcal{P}(s_t,a_t,s')$ and $\sum_{s' \in S} P(s'|s_t,\tilde{a}_t) {=} 1$.
Then, the environment transitions to a new state $s_{t+1} {\sim} P(\cdot|s_t,a_t)$, and returns a scalarized reward $\tilde{r}_t {=} R(s_t,a_t) {-} C(m_t)$ and observation $o_{t} {\sim} O(\cdot | s_{t+1}, m_t)$ to the agent.
The goal of the agent is to compute a policy $\pi$ with the highest expected discounted scalarized return.
We assume these policies are belief-based, that is, $\pi\colon b \rightarrow \tilde{A}$ for a belief $b\in \Delta(S)$ over states.

To make this problem more tractable, we make a few assumptions.
Our description of the agent-environment interactions already assumes full observability for nature, as well as a dynamic \citep{DBLP:journals/ior/NilimG05} and $(s,a)$-rectangular \citep{DBLP:journals/mor/WiesemannKR13} system.
These assumptions mean that the transition probabilities picked by nature may be different at each timestep and are independent of other transition probabilities, which are both common assumptions in RPOMDP literature.
Next, we assume nature is \emph{adversarial}, meaning it chooses transition functions to minimize the expected discounted scalarized return of the agent.
As for RPOMDPs, these assumptions mean worst-case transition probabilities are generally \emph{belief-dependent}, as shown by the RPOMDP in \cref{fig:PrmdpFail}.
Thus, the worst-case transition- and value functions, $P_\text{R}$ and $V_\text{R}$, are both belief-dependent.
We first introduce $b_\text{R}'(b,\tilde{a})$ as the expected distribution over states in the next step when taking action pair $\tilde{a} {\in} \tilde{A}$ in belief $b$:
\begin{equation}
\label{eq:robustBeliefUpdate}
    b_\text{R}'(s'|b, \tilde{a}) = \sum_{s \in S} b(s) P_\text{R}(s'|s,b,\tilde{a}).
\end{equation}
Using this notation, we define $P_\text{R}$ as follows:
\begin{equation}
\label{eq:PR_uACNO}
\begin{aligned}
    P_\text{R} (s'|s,b, \langle a,m \rangle) &= \arginf_{P_\text{R} \in \mathcal{P}(s,a,\cdot)} V_\text{R}\big(b'_\text{R}(b,\langle a,m \rangle)\big),
\end{aligned}
\end{equation}
where we note that the minimization affects both $V_\text{R}$ directly, as well as $b_\text{R}'$.
With this, $V_\text{R}$ is given as follows:
\begin{equation}
\label{eq:VR_uACNO}
\begin{aligned}
    V_\text{R}(b) =  & \max_{ \tilde{a} = \langle a, m \rangle \in  \tilde{A}} \sum_{s \in S} b(s) 
    \Big(R(s, a) -C(m) \\ 
    & + \gamma \sum_{s' \in s} P_\text{R}\big(s'|s,b,\tilde{a}\big)  V_\text{R}\big(b'_\text{R}(b,\tilde{a})\big) \Big).
\end{aligned}
\end{equation}

\section{\RAM Properties}\label{sec:ram-properties}
In this section, we highlight and discuss a number of interesting properties of \RAMs.

\begin{figure}
\renewcommand{\thesubfigure}{\lr{subfigure}}
    \hfill
    \begin{minipage}{.49\columnwidth}
        \resizebox{1\linewidth}{!}{\begin{tikzpicture}[
    shorten >=1pt,
    node distance=0.5cm and 2.75cm,
    on grid,
    >={Stealth[round]},
    every state/.style={fill=InkBlue}
    ]

    \def\distR{1.5cm}
    \node[tightS, fill=InkGrey]  (s_0)                                         {$s_0$};
    \node[tightP]    (p0)  [right= 1cm of s_0] {};
    \node[tightS, fill=InkGrey]          (s_-) [above right = of s_0]                      {$s_{-}$};
    \node[tightS, fill=InkGrey]          (s_+) [below right = of s_0]                      {$s_{+}$};
    
    \node[tightR, fill=InkYellow, ]        (R-) [above right = 0.5cm and \distR of s_-]  {$0.8$};
    \node[tightR, fill=InkRed]           (R5)  [below right = 0.5cm and \distR of s_-]  {$0$};
    \node[tightR, fill=InkGreen ]        (R+) [below right = 0.5cm and \distR of s_+]  {$1$};

  \path[->] (s_0)   edge node [above] {$a$}      (p0)
            (p0)    edge node [above, yshift=0.1cm, xshift=-0.2cm]  {$p {\in} [0,p_{\text{max}}]$} (s_-)
            (p0)    edge node [below, xshift=-0.2cm]  {$1{-}p$} (s_+)
            (s_-)   edge node [above, pos=0.15] {$a$} (R-)
            (s_-)   edge node [below, pos=0.15] {$b$} (R5)
            (s_+)   edge node [above, pos=0.15] {$a$} (R5)
            (s_+)   edge node [below, pos=0.15] {$b$} (R+)
            ;
\end{tikzpicture}}
    \end{minipage}
    \begin{minipage}{.49\columnwidth}
        \resizebox{1\linewidth}{!}{\begin{tikzpicture}[
    shorten >=1pt,
    node distance=0.5cm and 2.75cm,
    on grid,
    >={Stealth[round]},
    every state/.style={fill=InkBlue}
    ]

    \def\distR{1.5cm}
    \node[tightS, fill=InkGrey]  (s_0)                                         {$s_0$};
    \node[tightP]    (p0)  [right= 1cm of s_0] {};
    \node[tightS, fill=InkGrey]          (s_-) [above right = of s_0]                      {$s_{-}$};
    \node[tightS, fill=InkGrey]          (s_+) [below right = of s_0]                      {$s_{+}$};
    
    \node[tightR, shape=diamond, fill=InkRed, inner sep=0 ]        (R-) [above right = 0.5cm and \distR of s_-]  {${-}\infty$};
    \node[tightR, fill=InkYellow]           (R5)  [below right = 0.5cm and \distR of s_-]  {$0$};
    \node[tightR, shape=diamond, fill=InkGreen ]        (R+) [below right = 0.5cm and \distR of s_+]  {$\infty$};

  \path[->] (s_0)   edge node [above] {$a$}      (p0)
            (p0)    edge node [above, yshift=0.1cm, xshift=-0.2cm]  {$p {\in} [0,p_{\text{max}}]$} (s_-)
            (p0)    edge node [below, xshift=-0.2cm]  {$1{-}p$} (s_+)
            (s_-)   edge node [above] {$a$} (R-)
            (s_-)   edge node [below] { } (R5)
            (s_+)   edge node [above] {$b$} (R5)
            (s_+)   edge node [below] {$a$} (R+)
            ;
\end{tikzpicture}}
    \end{minipage}
    \caption{Environments \uMVtwoenv~(left) and \uMVenv~(right).}
    \label{fig:uMV}
\end{figure}

\paragraph{The worst-case transition function is measurement-dependent.}
\cref{eq:PR_uACNO} defines a worst-case transition function that depends on the complete action pair $\tilde{a} {=} \langle a,m \rangle$ rather than only on the control action $a$.
Thus, even though the uncertain transition function is independent of what measurement is chosen, the worst-case transition function is not.

As an example of why this dependency holds, consider the \uMVtwoenv \RACNO (\cref{fig:uMV}~left).
This environment has three states: an initial state $s_0$, and two next states $s_-$ and $s_+$ with different optimal actions $a$ and~$b$.
However, the reward for taking the optimal action in $s_-$ is lower than that in $s_+$.
We are interested in finding the worst $p$ when we measure in $s_0$ and when we do not.
When measuring, $s_-$ has a lower expected value than $s_+$, meaning the worst-case transition has $p {=} 1$.
When not measuring, however, this deterministic transition means the agent can safely pick action $a$ and receive a reward of $0.8$.
Instead, if $p$ is chosen closer to $0.5$, the expected return of taking action $a$ decreases, which gives worse expected returns overall.
Thus, the worst-case transition function depends on the chosen measuring action.

Intuitively, we find that for fully observable transitions (such as when measuring), the worst-case is simply given by maximizing worst-case outcomes.
However, for partially observable transitions (such as when not measuring), an \emph{unpredictable} outcome is often worse since this requires considering all possible outcomes for the next action.

\paragraph{High uncertainty can discourage measuring.}

The assumption that nature is adversarial 
(and thus chooses worst-case outcomes) %
is common for RMDPs.
However, in partially observable settings, nature does not only influence future predictions (via $P_\text{R}$) but (importantly) also predictions of past interactions (via $b_\text{R}$), and thus the current belief.
This may lead to overly conservative beliefs, especially if uncertainty is high.

As an example of overly conservative behavior induced by such beliefs, consider the \uMVenv  \RACNO~(\cref{fig:uMV} right).
As before, this environment has three states $s_0, s_{+}$, and $s_{-}$, where we interpret the latter two as a lucky and unlucky state.
In both, taking `safe' action $b$ leads to a neutral reward, while taking `risky' action $a$ gives an infinite positive reward in $s_+$ and an infinite negative reward in $s_-$.
We are interested in measuring behavior at different uncertainty intervals, as specified by $p_\text{max}$.
We notice the expected returns for $s_-$ are strictly lower than those of $s_+$, meaning an adversarial nature always chooses the highest possible probability $p$, regardless of whether the agent chooses to measure.
First, we assume $p_\text{max}{=}1$. 
This means the transition is deterministic, in which case measuring gives no additional information but still incurs a measuring cost and is thus sub-optimal.
Next, we assume $p_\text{max} {<} 1$. 
In this case, it is optimal to measure since this means spending a (finite) measuring cost to possibly achieve an infinite reward.
Counterintuitively, we find that high model uncertainty may lead to optimal policies taking fewer measurements than if model uncertainty is lower, even if measuring would alleviate this uncertainty. 
This property occurs even for finite returns and non-zero probabilities, as shown in \cref{sec:appendixProofs} \cite{appendix}.

The described behavior is the result of optimal robust policies (over-) optimizing for the worst case while not considering other possible outcomes.
This behavior makes sense in contexts where the environment must be considered adversarial, such as in security settings.
However, if policies are required to perform well on all possible models, such as when uncertainty represents confidence intervals, this over-optimization is unwanted behavior.
Moreover, if observations have additional value not captured by the model, such as to improve the model itself, we would want our policies to take measurements more leniently.
We will introduce a method to encourage such leniency in \Cref{sec:ControlRobustness}.
\section{Act-Then-Measure in \RACNOs}
\label{sec:uATM}

\begin{algorithm}[tb]
\caption{\textsc{Robust ATM Planner}}
\label{alg:ATM_Planner_Robust}
\begin{algorithmic}
\State Pre-compute $P_\text{RMDP}$ and $ Q_\text{RMDP}$
\State Initialise $b_0(s) = \delta(s, s_0)$
\While{\text{episode not completed}}
    \State Pick control action $a_t$ \Comment{\cref{eq:RobustControlAction}}
    \State Pick corresponding measuring action $m_t$ \Comment{\cref{eq:RobustMeasureCondition}}
    \State Execute $\tilde{a_t} = \langle a_t, m_t \rangle$
    \State Receive reward $\tilde{r}_t$ and observation $o_t$
    \State Determine next worst-case belief state $b_{t+1}$
    \Comment{\cref{eq:robustBeliefUpdate}}
\EndWhile
\State \textbf{return } $\sum_t \gamma^t \tilde{r}_t$
\end{algorithmic}
\end{algorithm}

In this section, we describe a method for finding approximate solutions for \RACNOs in a computationally tractable manner.
In particular, we extend the ATM heuristic \citep{DBLP:conf/aips/KraleS023} to an uncertain setting.

\begin{leftvrule}
    \textbf{The robust act-then-measure (RATM) heuristic:}
    \begin{enumerate*}
        \item chooses control-actions assuming that all (state) uncertainty will be resolved \emph{in the next} states (after one time-step); and
        \item chooses measuring-actions and updates beliefs assuming that all (state) uncertainty will be resolved \emph{after the next} states (after two time-steps).
    \end{enumerate*}
\end{leftvrule}
Since measurement actions only affect future state uncertainty, the first point of the heuristic allows us to \emph{pick control actions independently from measuring actions}.
Thus, our high-level strategy is given by \cref{alg:ATM_Planner_Robust}.
The remainder of this section will explain in detail how to perform each step in this algorithm.

\subsection{Choosing Control Actions}
Similar to the $Q_\text{MDP}$ heuristic \citep{DBLP:conf/icml/LittmanCK95}, the RATM heuristic means that returns of control actions can be approximated by those of the underlying RMDP:

\begin{equation}
\label{eq:RobustControlAction}
    Q_{\uATM}(b,a) = \sum_{s \in S} b(s) Q_{\text{RMDP}}(s,a) \approx Q_\text{R}(b,a) , %
\end{equation}
where $Q_{\uATM}$ denotes the approximate expected value when following the RATM heuristic, and $Q_{\text{RMDP}}$ and $Q_\text{R}$ denote optimal expected values for the \RACNO and its underlying RMDP, respectively.
Generally, $Q_{\text{RMDP}}$ can be efficiently pre-computed, allowing for faster policy computations than methods that fully consider partial observability.

\subsection{Computing Measuring Value}
Next, we need a method to pick measurement actions.
With noiseless and complete measurements, we define the \emph{robust measuring value} $\mvr$ as the difference in expected value between measuring and non-measuring actions:
\begin{equation}
\label{eq:mvr_exact}
    \mvr (b,a) = Q_{\uATM}(b, \langle a, 1 \rangle) - Q_{\uATM}(b, \langle a, 0 \rangle).
\end{equation}
Measuring is optimal for positive measuring values only, which yields the following measuring condition:
\begin{equation}
\label{eq:RobustMeasureCondition}
    m_{\text{RATM}}(b,a) = 
    \begin{cases}
        1 & \text{if } \mvr(b,a) \geq 0; \\
        0 & \text{otherwise},
    \end{cases}
\end{equation}
To compute $\mvr$, we need expressions for $Q_\uATM(b,\langle a, 1 \rangle)$ and $Q_\uATM(b,\langle a, 0 \rangle)$.
We note that the RATM heuristic means that for both, \cref{eq:RobustControlAction} can be applied to all beliefs \emph{after} the next one.
Thus, the Q-value when measuring is given as:
\begin{equation}
\label{eq:QMeasureRobust}
\begin{aligned}
        Q_\uATM & (b,\langle a, 1\rangle) = R(b,a) - c + \gamma \sum_s b(s) \\
    & \Big[ \sum_{s'}  P_\text{R}(s'|s,\langle a, 1 \rangle,b)  \max_{a}  Q_\text{RMDP}(s',a) \Big],
\end{aligned}
\end{equation}
with $R(b,a) {=} \sum_{s} b(s) R(s,a)$.
Here, we decide the next actions for each state separately, which we can only do if we take a measurement.
When not measuring, we must instead pick an optimal action considering all possible next states:
\begin{equation}
\label{eq:QNonMeasureRobust}
\begin{aligned}
     Q_\uATM & (b,\langle a, 0\rangle) = R(b,a) + \gamma \sum_s b(s)  \\
     & \Big[ \max_{a} \sum_{s'} P_\text{R}(s'|s,\langle a, 0 \rangle,b)  Q_\text{RMDP}(s',a) \Big].   
\end{aligned}
\end{equation}
Combining both equations, we write the robust measuring value of \cref{eq:mvr_exact} as follows:
\begin{equation}
\begin{aligned}
    \text{M}&\text{V}_\uATM (b,a) = -c + \max_{a' \in A} \gamma \sum_{s \in S} b(s) \\
    & \Big[ Q_\text{RMDP}(s,a) {-} \sum_{s' \in S} P_\text{R}(s'|s,\langle a, 0 \rangle, b) Q_\text{RMDP}(s',a') \Big].
\end{aligned}
\end{equation}
We notice that this equation contains only belief-independent and thus pre-computable quantities, with the exception of the transition function $P_\text{R}$.
This function is equal to $P_\text{RMDP}$ when measuring and otherwise given as:
\begin{equation}
\label{eq:PRobust}
\begin{aligned}
    P_\text{R} (s'|s,\langle a, 0 \rangle,b) = & \max_{a_b \in A} \min_{P(\cdot|s,a)\in \mathcal{P}(s,a,\cdot)} \sum_s b(s) \\
    & \Big[ \sum_{s' \in S} P(s'|s,a) Q_\text{RMDP}(s', a_b) \bigg].
\end{aligned}
\end{equation}
This equation can be tractably solved by a (non-convex) mixed integer program (MIP).
However, since this problem needs to be solved at every step, we find these computations take up the majority of the (online) runtime in our experiments.
With all quantities defined, we have fully described all steps in \cref{alg:ATM_Planner_Robust}.
Alternatively, we may define this algorithm as a policy:
\begin{equation}
\label{eq:RobustATMPolicy}
    \pi_\uATM(b) = \langle \max_{a \in A} Q_\text{R}\big(b,a\big) , m_\uATM\big(b, \max_{a \in A} Q_\text{R}(b,a)\big) \rangle.
\end{equation}

\section{Measurement Leniency}
\label{sec:ControlRobustness}

As outlined in \Cref{sec:ram-properties}, policies can exhibit (overly) conservative measuring behavior in \RAMs, particularly if model uncertainty is high.
In this section, we propose \emph{measurement leniency} to counteract this behavior.
Intuitively, measurement leniency means that agents choose control actions to optimize for the worst case, but may take extra measurements according to less pessimistic metrics, such as average expected returns.
Since the cost of extra measurements is bounded and predictable, measurement leniency might give sufficient robustness guarantees for many real-life applications while allowing less conservative behavior.
We formally define measurement leniency as follows:

\begin{definition}
Let $\pi$ be any policy. A corresponding \emph{measurement lenient} policy is any policy $\pi_\CR$ such that:
\begin{enumerate}[leftmargin=*]
    \item $\forall b, \pi(b) = \langle a, 1 \rangle \implies  \pi_\CR(b) = \langle a, 1 \rangle$, and
    \item $\forall b, \pi(b) = \langle a, 0 \rangle \implies  \pi_\CR(b) = \langle a, m \rangle$.
\end{enumerate}
\end{definition}
Using this definition, we define an optimal measurement lenient policy as one that maximizes the expected discounted scalarized return in some (less conservative) environment $\mathcal{M}_\CR$.
For example, if a probability distribution over transition functions is known, $\mathcal{M}_\CR$ could represent the most likely outcomes.
We formalize this as follows:

\begin{definition}
Let $\pi$ be a policy for an \RACNO $\mathcal{M}$, and $\Pi_\CR$ the corresponding set of measurement lenient policies.
Further, let $\mathcal{M}_\CR$ be any active-measure model with the same state- and action space as $\mathcal{M}$.
The \emph{optimal measurement lenient} policy $\pi_\CR^{*}$ with respect to $\mathcal{M}_\CR$ is given as:
\begin{equation}
    \pi_\CR^{*} = \arg \max_{\pi_\CR \in \Pi_\CR} \E_{\pi_\CR,\mathcal{M}_\CR} \sum_t \gamma^t \tilde{r}_t
\end{equation}
\end{definition}

\subsection{Computing Measurement Lenient Policies}

\begin{figure}[tb]
    \centering
    \begin{minipage}{.65\columnwidth}
        \resizebox{1\linewidth}{!}{\begin{tikzpicture}[
    shorten >=1pt,
    node distance=0.5cm and 2.5cm,
    on grid,
    >={Stealth[round]},
    every state/.style={fill=InkBlue}
    ]

    \def\distR{3.5cm}
    \node[tightS, fill=InkGrey]  (s_0)                                         {$s_0$};
    \node[tightP]    (p0)  [right= 1cm of s_0] {};
    
    \node[tightS, fill=InkGrey]          (s_+) [above right = of s_0]                      {$s_{+}$};
    \node[tightP]   (p+)  [right=1cm of s_+] {};
    
    \node[tightS, fill=InkGrey]          (s_-) [below right = of s_0]                      {$s_{-}$};
    
    \node[tightR, fill=InkGreen ]        (R+) [above right = 0.5cm and \distR of s_+]  {$1$};
    \node[tightR, fill=InkRed]           (R-)  [below right = 0.5cm and \distR of s_+]  {${-}1$};
    \node[tightR, fill=InkYellow ]       (R0) [below right = 0.5cm and \distR of s_-]  {$0$};

    \node[coordinate]                         (H0) [below=0cm of s_-] { };
    \node[coordinate]                         (H2) [right=0.75cm of R0] { };
    \node[coordinate]                         (H1) [below=0.6cm of R0] {};
    \node[coordinate]                         (H3) [right=0cm of R-] { };

    \draw [->] (s_-)  to[out=-45, in=180] node [above, xshift=-1.5cm, yshift=0.2cm] {a}  (H1) to[in=-90,out=0] (H2) to[in=0,out=90, ->] (R-) ;
  \path[->] (s_0)   edge node [above] {$a$}      (p0)
            (p0)    edge node [above]  {$0.5$} (s_+)
            (p0)    edge node [below]  {$0.5$} (s_-)
            (s_+)   edge node [above] {$a$} (p+)
            (p+)    edge node [above, yshift=0.1cm] {$p_{+} {\in}[0,1]$} (R+)
            (p+)    edge node [below] {$1-p_{+}$} (R-)
            (s_+)   edge  (R0)
            (s_-)   edge node [above,pos=0.15, yshift=0.1cm] {$b$} (R0)

            ;
\end{tikzpicture}}
    \end{minipage}
    \caption{
    A \RACNO where the measuring value of $\mathcal{M}_\CR$ for a measurement lenient policy is sub-optimal.
    Assuming an adversarial nature, the optimal control action in the states $s_+$ and $s_-$ is~$b$, meaning measuring in $s_0$ is sub-optimal.
    However, choosing a $\mathcal{M}_\CR$ with $p_+ {>} 0.5$ would yield a positive measuring value.
    }
    \label{fig:MVcrFail}
\end{figure}
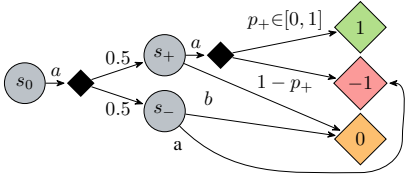

By construction, control actions of measurement lenient policies are equal to those in their base policies.
For any belief $b$, we may thus assume a control action $a_\text{R}(b)$ is given.

In order to make optimal measuring choices, we need to keep track of the current belief according to the dynamics of both $\mathcal{M}$ and $\mathcal{M}_\CR$.
For the latter, we denote $b_\CR$ as the current belief and $b'_\CR(b_\CR,\tilde{a})$ as the belief after taking action $\tilde{a}$ in belief $b_\CR$.
However, as shown in \cref{fig:MVcrFail}, simply using this belief to compute generic measuring value for $\mathcal{M}_\CR$ does not yield the correct behavior since this does not take into account that future control actions will be based on $\mathcal{M}$ instead.
To account for this, we first define the Q-value function for following a measurement lenient policy in $\mathcal{M}_\CR$:
\begin{equation}
\label{eq:QCR_general}
\begin{aligned}
    & Q_{\CR} (b_{\CR}, b, \langle a, m \rangle) = R(b_{\CR}, a) - C(m) \\
    & {+} \gamma \max_{m' \in M}  Q_{\CR} \Big( b'_{\CR}(b_\CR,\tilde{a}), b'_\text{R}(b,\tilde{a}), \langle a_\text{R}(b'_\text{R}(b,a)), m'\rangle \Big) 
\end{aligned}
\end{equation}
For complete and noiseless measurements, we express the measuring value for measurement lenient policies as:
\begin{equation}
\label{eq:mv_cr}
\begin{aligned}
    \mv_\CR(b_\CR, b, a) {=}  Q_{\CR} (b_{\CR}, b, \langle a, 1 \rangle) {-}  Q_{\CR} (b_{\CR}, b, \langle a, 0 \rangle)
\end{aligned}
\end{equation}
We first note that after a measurement $b'_\text{R}$ and $b'_\CR$ are always equal to the observation that has been made.
Thus, the Q-value when measuring can be expressed as follows:
\begin{equation}
\begin{aligned}
     Q_\CR & (b_\CR, b, \langle a, 1 \rangle) = R(b_\CR, a) - c \\
     & + \gamma \sum_{s \in S} b'_\CR(s|b_\CR, \tilde{a}) \max_{m' \in M} Q_\CR(s, s, \langle a_\text{R}(s), m'\rangle )
\end{aligned}
\end{equation}
Unfortunately, there is no trivial way to simplify our expression for non-measuring actions without making further assumptions about the policy choosing control actions.
For the general case where the fully robust policy is unknown, we thus propose to approximate our Q-values using the robust act-then-measure heuristic, as defined in \cref{sec:uATM}.
We restate the relevant part as follows:
\begin{leftvrule}
    \textbf{Measurement lenient approximation}: Choose measurement lenient measuring actions assuming all (state) uncertainty will be resolved \emph{after} the next state.
\end{leftvrule}
Using this approximation, we can simplify our equations by replacing future Q-values with those of the fully observable variant of the environment.
We denote this Q-value as $Q_\CRMDP$ and rewrite \cref{eq:QCR_general} as follows:
\begin{equation}
\begin{aligned}
    Q_{\CR} & (b_{\CR}, b, \langle a, m \rangle) \approx R(b_{\CR}, a) - C(m) \\
    & + \gamma \sum_{s' \in S}  Q_{\CRMDP}\Big(s', a_\text{R} \big(b'_\text{R}(b, \langle a, m \rangle )\big) \Big)
\end{aligned}
\end{equation}
We note that this expression does not require solving an optimization problem, meaning it can be computed quickly.
With this, we approximate \cref{eq:mv_cr} as follows:
\begin{equation}
\begin{aligned}
    \mv_\CR & (b_{C}, b, a) \approx - C(m) + \gamma \max_{a_b \in A} \sum_{s \in S} b_\CR(s) \\
    & \Big[ \max_{a \in A} Q_{\CRMDP} (s', a ) - Q_{\CRMDP} (s', a_\text{R}(b'_\text{R}(b,a))) \Big]
\end{aligned}
\end{equation}
For measurement lenient policies, the measuring condition requires the measuring value of both $\mathcal{M}_\CR$ and $\mathcal{M}$ to be non-negative, which gives:
\begin{equation}
    m_\CR(b_\CR, b, a) = \begin{cases}
    1 & \text{if } \mv_\CR(b_\CR,a) \geq 0 \\
      & \text{or } \mv_\text{R}(b,a) \geq 0 \\
    0 & \text{otherwise.}
    \end{cases}
\end{equation}

\begin{figure}[tb]
    \centering
    \includegraphics[width=0.65\columnwidth]{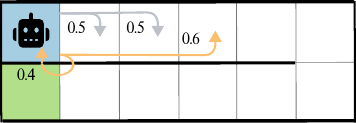}
    \caption{A $2{\times}5$ visualization of the \snakemaze environment.
    An agent traverses a snaking maze from the blue initial stat to the green goal state.
    It moves via \emph{safe} (grey) and \emph{risky} (yellow) actions with different stochastic effects.
    }
    \label{fig:SnakemazeEnv}
\end{figure}

\subsection{Regret of Measurement Lenient Policies}

One obvious downside of using measurement lenient policies is that their worst-case performance is generally lower.
However, we can show that their performance loss, as compared to their base policy, is bounded.
Intuitively, this bound follows from the fact that measurement lenient policies only take extra measurements, which decrease the total expected returns by (at most) $c$ per step.
We state this more formally:
\begin{theorem}
\label{thm:CRbound}
    Given an \RACNO $\mathcal{M}$ with complete and noiseless measurements and policy $\pi$.
    For any corresponding measurement lenient policy $\pi_\CR$, the following holds
    \begin{equation}
        \forall b: V(\pi,b) - V(\pi_\CR,b) \leq \sum_{n=0}^{\infty} \gamma^n c
    \end{equation}
\end{theorem}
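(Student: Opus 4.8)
The plan is to reduce the statement to a per-step accounting of measurement costs via a coupling of the two processes. The two key structural facts I would extract from the definition of a measurement lenient policy are: (i) at every belief $b$, $\pi_\CR$ executes the \emph{same control action} as $\pi$; and (ii) $\pi_\CR$ measures whenever $\pi$ does and possibly at additional steps, so that $C(m_t^\CR) \ge C(m_t)$ with $C(m_t^\CR) - C(m_t) \in \{0, c\}$ at every step. Fact (i) ensures that the control-relevant dynamics (the transition intervals $\mathcal{P}$ and the rewards $R(s,a)$, both independent of $m$) are driven by identical inputs in the two runs, while fact (ii) localizes the entire discrepancy between the runs to the measurement-cost term of the scalarized reward $\tilde r_t = R(s_t,a_t) - C(m_t)$.

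Concretely, I would couple an execution of $\pi$ and an execution of $\pi_\CR$ on a shared realization of nature's adversarial choices and the transition randomness, so that the realized control actions $a_t$ and states $s_t$ coincide step for step. Under this coupling the reward components $R(s_t,a_t)$ cancel, and the only surviving difference is $\sum_t \gamma^t \big(C(m_t^\CR) - C(m_t)\big)$. Bounding each summand by $c$ and summing the geometric series yields $V(\pi,b) - V(\pi_\CR,b) \le \sum_{n=0}^{\infty} \gamma^n c$, as required. Equivalently, I could phrase this as an induction on the robust Bellman recursion: the immediate scalarized reward of $\pi_\CR$ is at most $c$ below that of $\pi$ (same $R(b,a)$, extra cost $\le c$), and adding $\gamma$ times the inductive bound $\tfrac{c}{1-\gamma}$ reproduces $c + \tfrac{\gamma c}{1-\gamma} = \tfrac{c}{1-\gamma}$, so the target bound is a fixed point of the recursion.

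The step I expect to be the main obstacle is justifying that the realized control actions can actually be coupled to coincide. The difficulty is that an extra measurement taken by $\pi_\CR$ (where $\pi$ abstains) refines $\pi_\CR$'s belief, so its subsequent beliefs---and hence its control choices via $a_\text{R}(\cdot)$---may diverge from those of $\pi$, and the adversarial nature can react differently to the two belief streams. I would handle this by arguing that the additional information is \emph{weakly non-harmful} to the worst-case continuation value: collapsing a belief onto the observed state can only (weakly) increase the robust value attainable under the shared control rule, so replacing $\pi_\CR$'s refined continuation by the coarser continuation that $\pi$ experiences gives a valid lower bound on $V(\pi_\CR,\cdot)$, reducing the general case to the coupled case above. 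Making this monotonicity-of-information argument precise in the presence of $(s,a)$-rectangular adversarial nature is the technical heart of the proof; everything else is the routine geometric-series bookkeeping sketched above.
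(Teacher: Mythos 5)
Your proposal takes essentially the same route as the paper: the paper proves the bound by induction on the horizon, showing the gap at a disagreeing belief is at most $c$ plus $\gamma$ times the inductive bound, which is exactly your ``fixed point of the Bellman recursion'' phrasing (the paper actually establishes a sharper version in which the geometric sum is truncated at an upper bound $N_\mathcal{B}$ on the number of extra measurements, with the stated theorem as the $N_\mathcal{B}\to\infty$ corollary). The step you single out as the technical heart---that an extra measurement cannot decrease the continuation value, which is what lets one compare the two diverging belief streams---is also the crux of the paper's argument, where it is invoked as the ``general inequality'' $V\bigl(\pi, b'_R(b,\langle a,0\rangle)\bigr) \le V\bigl(\pi, b'_R(b,\langle a,1\rangle)\bigr)$ without further justification, so your sketch is at the same level of rigor on that point.
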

We prove this theorem in \cref{sec:appendixProofs} \cite{appendix}.

\begin{figure*}[t]
\centering
        \includegraphics[width=.195\textwidth]{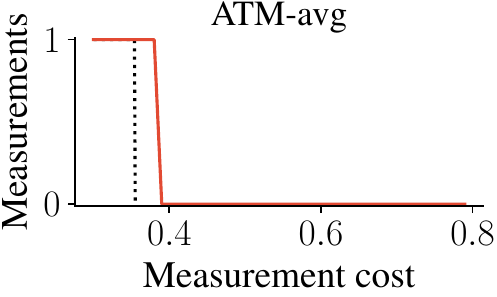}
        \includegraphics[width=.195\textwidth]{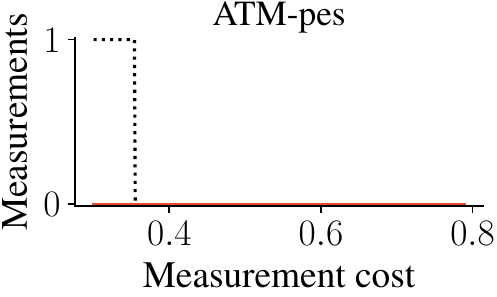}
        \includegraphics[width=.195\textwidth]{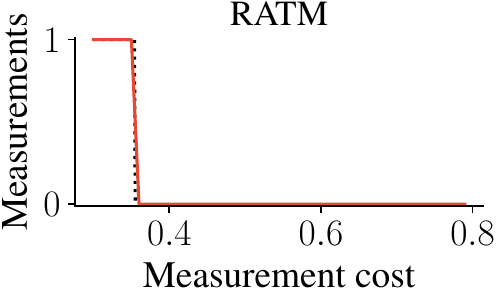}
        \includegraphics[width=.195\textwidth]{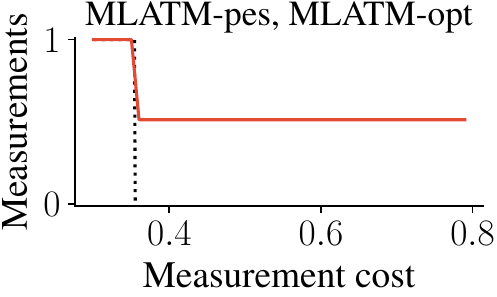}
        \includegraphics[width=.195\textwidth]{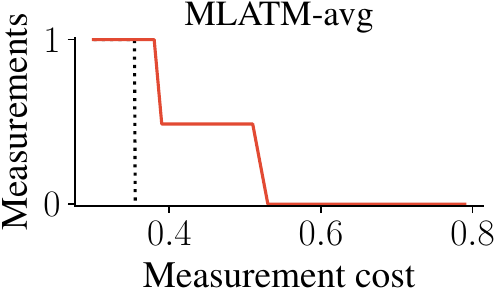}
    \caption{Mean number of measurements in \uMVtwoenv environment against measuring cost.
    Dotted lines show optimal behavior.}
    \label{fig:uMV2_measuring}
\end{figure*}

\begin{figure*}[tbh]
\centering
        \includegraphics[width=.195\textwidth]{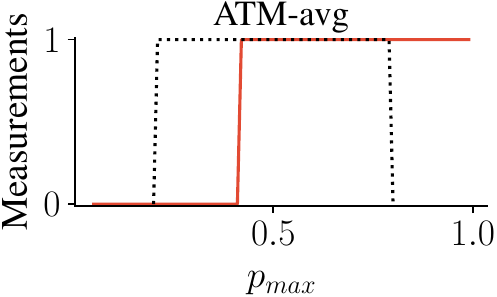}
        \includegraphics[width=.195\textwidth]{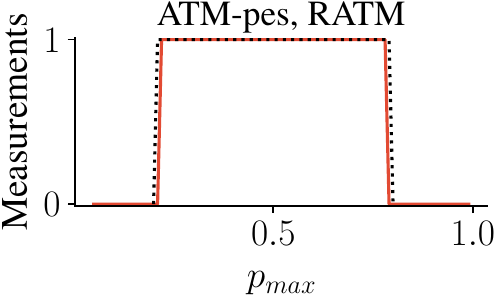}
        \includegraphics[width=.195\textwidth]{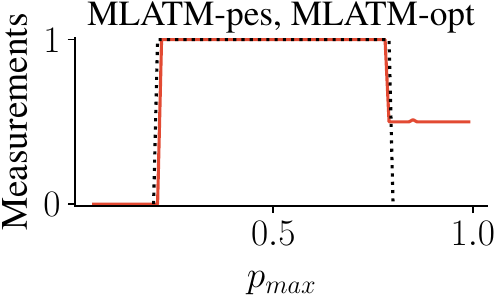}
        \includegraphics[width=.195\textwidth]{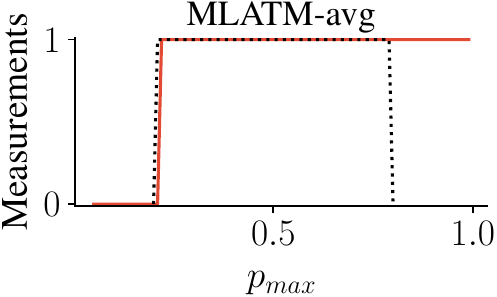}
    \caption{Mean number of measurements in \uMVenv environment against $p_\text{max}$.
    Dotted lines show optimal behavior.}
    \label{fig:uMV_measuring}
\end{figure*}

\section{Empirical Analysis}
\label{sec:experiments}

This section presents an empirical analysis of the behavior and performance of the proposed methods.
We run experiments on
\begin{enumerate*}
    \item the \uMVtwoenv and \uMVenv environments (both \cref{fig:uMV}); and
    \item two larger custom environments.
\end{enumerate*}
We test the following algorithms:
\begin{itemize}
    \item \uATM: the robust planning algorithm following the RATM heuristic, as described in \cref{alg:ATM_Planner_Robust}.
    \item \ucATM: the measurement lenient variant of \uATM. 
    We test three choices of $\mathcal{M}_\CR$: an \emph{optimistic}, \emph{pessimistic} and \emph{average} variant, denoted \ucATMopt, \ucATMpes and \ucATMavg and defined in \cref{sec:appendixResults} \cite{appendix}.
    \item \ATM: as a baseline, we use the \ATM planner from \citet{DBLP:conf/aips/KraleS023}. 
    We test two variants, denoted by \ATMavg and \ATMpes, which plan on the average-case environment and on the environment with transition function $P_\text{RMDP}$.
\end{itemize}
We provide code and data at \gitLink. 

\subsection{Behavior Evaluation}
We start with two small-scale experiments to determine how our algorithms
\begin{enumerate*}
    \item incorporate the effect of measuring in their worst-case transition function; and
    \item change measuring behavior for different sizes of uncertainty intervals.
\end{enumerate*}
For this, we run all algorithms on both the \uMVtwoenv and \uMVenv environments (\cref{fig:uMV}) and compare the algorithms with optimal behavior (depicted by dotted lines in the results), as calculated in \cref{sec:appendixResults} \cite{appendix}.

\cref{fig:uMV2_measuring} shows that \ATMavg and \ATMpes do not measure optimally in this environment, which shows they do not incorporate the effect of measuring on the transition function.
In contrast, \uATM measures optimally in this environment, while all measurement lenient variants take more measurements than optimal\footnotemark.
\cref{fig:uMV_measuring} shows that all algorithms except \ATMavg measure optimally when uncertainty is low, while the measurement lenient variants make (sub-optimal) measurements for high uncertainty, as expected.

\footnotetext{Surpisingly, some algorithms show non-deterministic measuring behavior. 
We explain this in \cref{sec:appendixResults} \cite{appendix}.}

\subsection{Performance Evaluation}

\begin{figure*}[tb]
        \hfill
        \includegraphics[width=.485\textwidth]{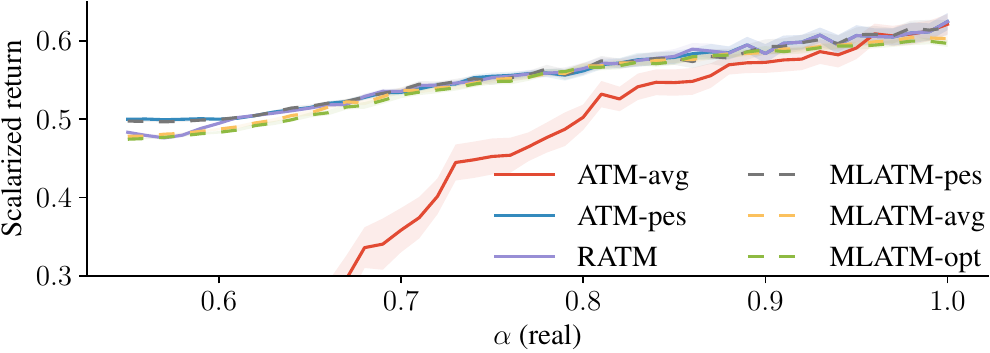}
        \hfill
        \includegraphics[width=.485\textwidth]{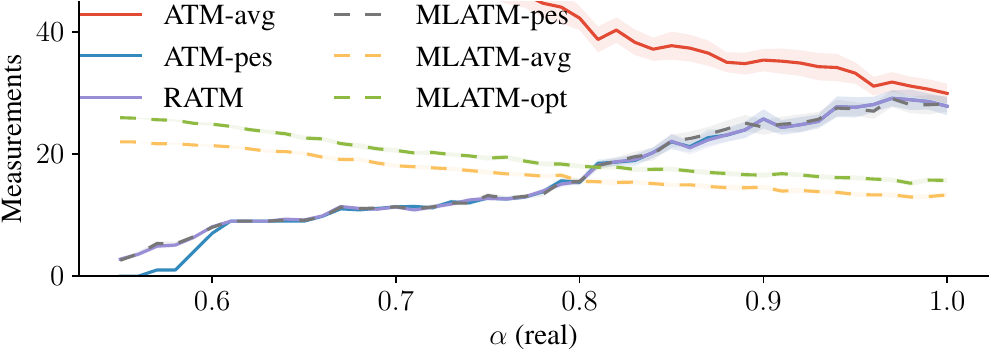}
        \hfill~
    \caption{%
    Returns and number of measurements in the \snakemaze environment, with $c{=}0.01$.
    Uncertainty is parameterized by confidence level $\alpha$ and decreases left-to-right.
    Lines show the mean of 50 runs, with $95\%$ confidence intervals shaded.}
    \label{fig:SnakemazeResults}
\end{figure*}

\begin{figure*}[tb]
        \hfill
        \includegraphics[width=.485\textwidth]{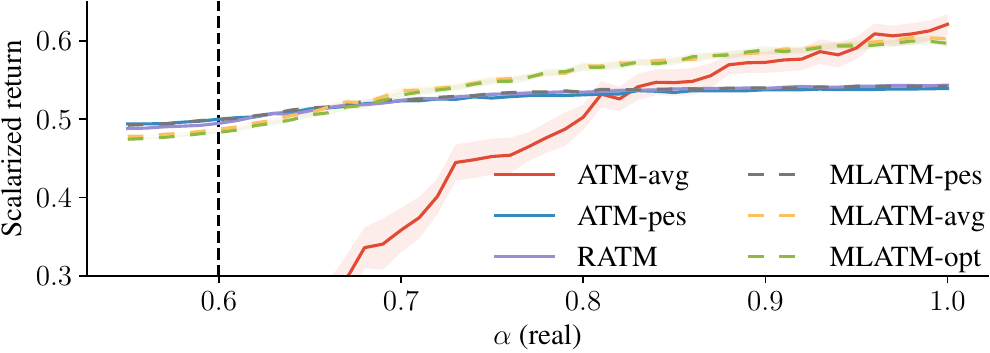}
        \hfill
        \includegraphics[width=.485\textwidth]{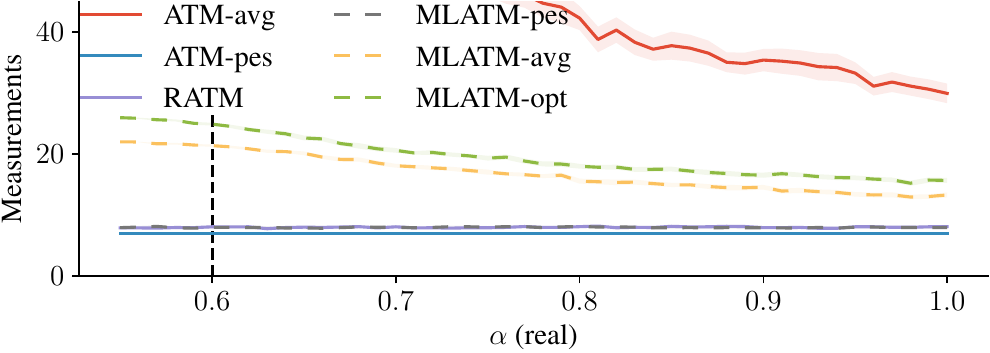}
        \hfill~
    \caption{%
    Returns and number of measurements in the \snakemaze environment, with $c =0.01$.
    Algorithms plan at uncertainty parametrized by $\alpha_p = 0.6$, but real dynamics is parametrized by $\alpha$.
    Thus, real uncertainty decreases left-to-right, while planning uncertainty is given by the dotted line.
    Lines show the mean of 50 runs, with $95\%$ confidence intervals shaded.}
    \label{fig:SnakemazeResultsMisspec}
\end{figure*}

Next, we test the performance of our algorithms on a custom environment called \snakemaze, which is designed to require conservative behavior.
A visualization is provided in \cref{fig:SnakemazeEnv}.
Starting in the top-left corner, the agent has to traverse a $10{\times}10$ snaking maze.
For each cardinal direction, the agent has the option to choose a \emph{safe} or \emph{risky} action.
A safe action has a $0.5$ chance to move the agent either 1 or two steps in the given direction, while the risky action has a $0.6$ chance of moving the agent three steps, but a $0.4$ chance of not moving the agent.
Thus, risky actions move the agent further on average, but even little uncertainty in the transition probabilities changes this.
The agent receives reward $1$ for reaching the goal and a small penalty for each prior step.

Following \citet{DBLP:conf/nips/Osogami12}, we parametrize uncertainty with a \emph{confidence level} $\alpha {\in} (0,1]$.
We define our uncertainty such that any transition probability is at most a factor $\nicefrac{1}{\alpha}$ larger than for some base transition function $P$, i.e., $ \forall s,s'{\in} S, A {\in} A \colon \mathcal{P}(s,a,s') {=} [0,\nicefrac{1}{\alpha}P(s'|s,a)]$.
Thus, $\alpha{=}1$ represents no model uncertainty, while uncertainty increases as $\alpha$ approaches 0.
Since computing the exact robust transition function for a \RACNO is intractable, we instead test our algorithms on the \emph{worst-case transition function assuming full observability}, i.e., using the transition function of the underlying RMDP.
This means that measurement-dependent worst-case transitions (as in the \uMVtwoenv environment) never occur, and we expect \ATMpes (which optimizes for the RMDP environment) to outperform the other algorithms.

\cref{fig:SnakemazeResults} (left) shows the scalarized returns of the algorithms at different confidence levels $\alpha$.
We see that \ATMavg gets outperformed by all other algorithms, while for $\alpha {<} 0.7$ \ucATMavg and \ucATMopt perform slightly worse than the more conservative algorithms.
This difference is caused by their different measuring behavior, as shown on the right.

Next, we are interested in how the algorithms perform if uncertainty is misspecified, i.e., if the algorithms plan with a different uncertainty than that of the real environment.
To test this, we let the algorithms plan on the \snakemaze environment with uncertainty parametrized by $\alpha_p {=} 0.6$, while we deploy the algorithms on the environment with $\alpha {\in} [0.55, 1]$.
\cref{fig:SnakemazeResultsMisspec} (left) shows the scalarized returns of the different algorithms.
We now see the advantage of measurement leniency: although \ucATMavg and \ucATMopt still perform slightly worse for large uncertainty, they outperform the more conservative algorithms for $\alpha {>} 0.7$.
This can be explained by the algorithms taking more measurements (as shown on the right), which allows them to take advantage of the more favorable environment.

\subsection{Scalability Evaluation}

\begin{figure*}[tb]
        \hfill
        \includegraphics[width=.485\textwidth]{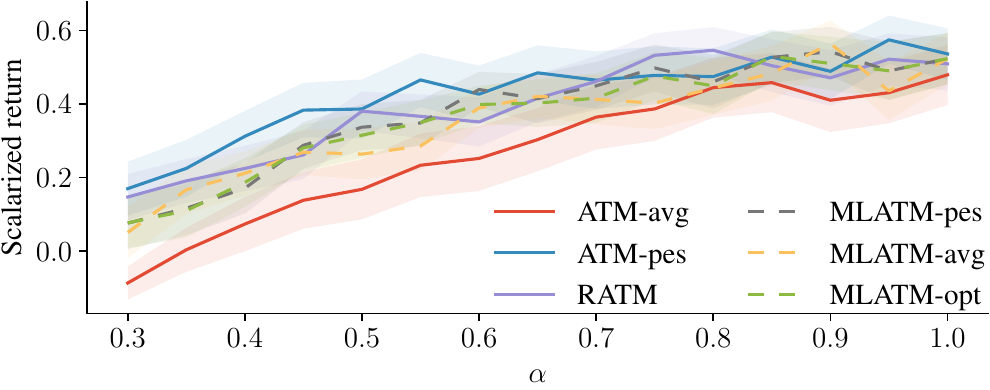}
        \hfill
        \includegraphics[width=.485\textwidth]{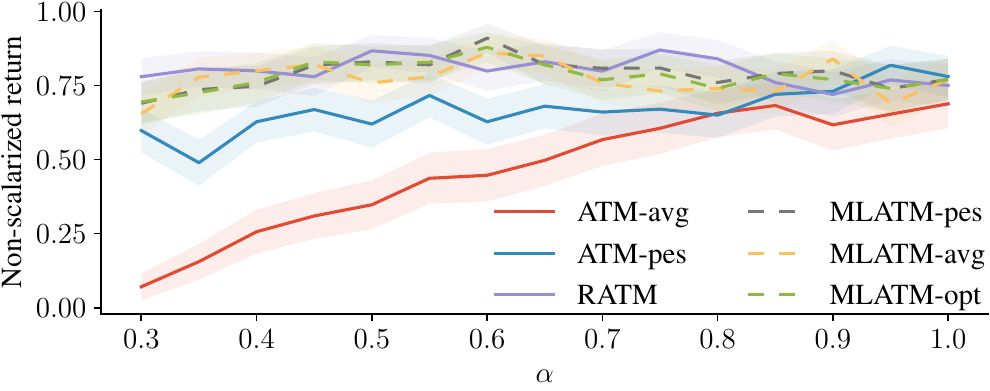}
        \hfill~
    \caption{%
    Scalarized and non-scalarized returns in the \drone environment, with $c{=}0.01$.
    Uncertainty is parameterized by confidence level $\alpha$ and decreases left-to-right.
    Lines show the mean of 100 runs, with $95\%$ confidence intervals shaded.}
    \label{fig:DroneResults}
\end{figure*}

\begin{figure*}[tb]
        \hfill
        \includegraphics[width=.485\textwidth]{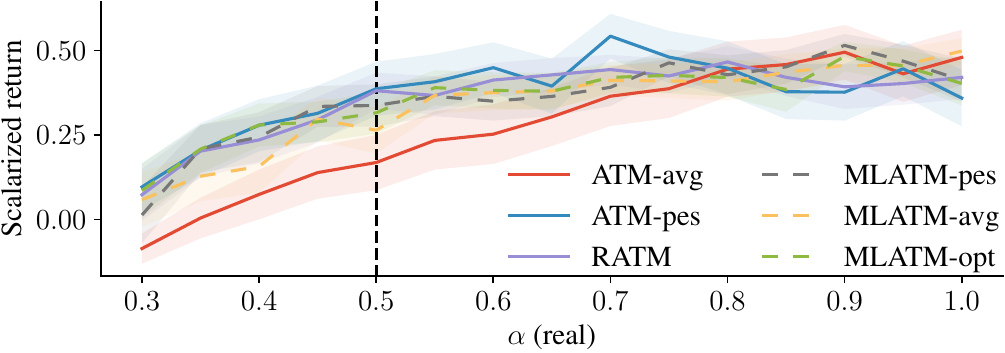}
        \hfill
        \includegraphics[width=.485\textwidth]{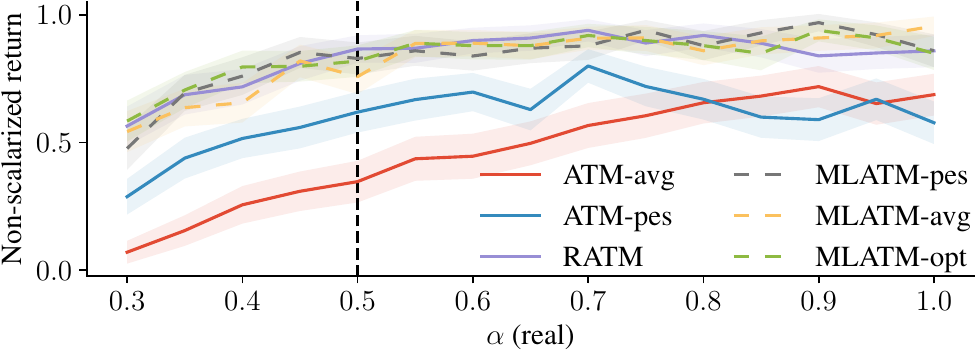}
        \hfill~
    \caption{%
    Scalarized and non-scalarized returns in the \drone environment, with $c =0.01$.
    Algorithms plan at uncertainty parametrized by $\alpha_p = 0.5$, but real dynamics is parametrized by $\alpha$.
    Thus, real uncertainty decreases left-to-right, while planning uncertainty is given by the dotted line.
    Lines show the mean of 50 runs, with $95\%$ confidence intervals shaded.}
    \label{fig:DroneResultsMisspec}
\end{figure*}

Lastly, to show our algorithms scale to larger environments, we run all algorithms on a custom \drone environment inspired by the example of \cref{fig:DroneExample}.
A full explanation of this environment is given in \cref{sec:appendixResults}.
The environment is a simplified and discretized motion model on a 2D grid, with $|S|{=}39,204$ states, $|A|{=}25$ actions, and up to $25$ successor states per state-action pair.
The agents receive a positive for reaching a certain set of goal states, and a small penalty for each prior step.
Like before, we parameterize uncertainty using confidence levels, and approximate the worst-case transition function by that of the underlying RMDP.

\cref{fig:DroneResults} shows both the scalarized (left) and non-scalarizd (right) returns of the algorithms at different confidence levels $\alpha$.
As for the \snakemaze environment, we find \ATMavg performs worse then the other algorithms, while \ATMpes slightly outperforms all (control) robust algorithms in terms of scalarized returns.
However, our algorithms outperform both \ATMavg and \ATMpes in terms of non-scalarized returns (i.e. returns excluding measuring costs).
Thus, our algorithms are more often able to reach the goal states, but take more measurements to do so.

For misspecified uncertainty, as shown in \cref{fig:DroneResultsMisspec}, we find similar results.
All algorithms except \ATMavg perform about on par in terms of scalarized return, while our algorithms outperform the baselines in terms of non-scalarized return.
Notably, we do not find a significant difference between robust and measurement lenient algorithms for this particular environment.
We suspect this is caused by a combination of
\begin{enumerate*}
    \item the transition function not being measurement-dependent; and
    \item the worst-case outcomes already incentivizing more frequent measuring (in contrast to the \snakemaze environment).
\end{enumerate*}
However, further analysis may be an interesting line of future research.

\subsection{Discussion}
Finally, we provide a summary of our findings.
\paragraph{R(C)ATM considers the effect of measuring.} 
\uATM performs optimally in the \uMVtwoenv environment, which is only possible when considering the effect of measuring. 
The control-robust variants only perform sub-optimally by taking more measurements, as expected.

\paragraph{R(C)ATM outscales previous methods.}
\uATM remains tractable and performs relatively well on the \drone environment, which is not solvable by prior robust methods.

\paragraph{Measurement leniency can prevent conservative measuring.}
Our experiments in the \uMVenv and \snakemaze environment show measurement leniency incentivizes measuring in some uncertain settings.
However, our experiments in the \drone environment show this is not always the case.
Thus, it is an open question whether measurement leniency affects performance in realistic settings.
\section{Related Work}
\label{sec:RelatedWork}

Active-measure MDPs, with noiseless and complete measurements, were introduced independently by \citet{DBLP:conf/nips/NamFB21} and \citet{DBLP:conf/ai/BellingerC0T21}, who both focussed on RL applications.
\citet{DBLP:conf/aips/KraleS023} introduced the ATM heuristic, which finds a tradeoff between performance and scalability.
Similar frameworks include those of \citet{DBLP:journals/ai/Doshi-VelezPR12}, who consider a setting where measurements return the optimal action, and \citet{DBLP:conf/nips/MateKXPT20}, who consider active measuring in a multi-armed bandit setting.
Active measuring has also been considered in settings where measuring cost and rewards are not combined, but measuring costs are constrained \citep{DBLP:conf/cdc/GhasemiT19} or minimized \citep{DBLP:conf/atva/BulychevCDLRR12}.
Lastly, some prior work considers setting with only measuring actions, where gathering information is the only goal. \citep{DBLP:conf/miccai/BernardinoJLCSC22,DBLP:conf/ewrl/Araya-LopezBTC11}.

Although \RAMs have not been studied previously, the more general framework of RPOMDPs has.
\citet{@osogamiRobustPartiallyObservable2015} and \citet{DBLP:conf/aaai/Cubuktepe0JMST21} describe methods for solving adversarial RPOMDPs, using value iteration and finite state controllers, respectively.
However, the former method scales poorly to large environments, while the latter only produces policies with small memory.
Next, \citet{@rasouliRobustPartiallyObservable2018} gives an in-depth analysis of RPOMDPs with different assumptions, and \citet{ElineThesis} describes how to represent uncertain beliefs without assuming adversariality.
However, their methods are intractable for the sizes of environments considered here.

\section{Conclusion}
We introduced \RAMs as a framework to represent active measuring environments with model uncertainty.
To solve a specific subset of \RAMs, we re-defined the act-then-measure heuristic for generic active-measure environments for uncertain settings.
Next, we proposed \emph{measurement leniency} to deal with overly conservative measuring behavior.
We empirically evaluate both generic and measurement lenient variants of our algorithm, showing they are tractable and outperform non-robust baselines.

Future work will focus on making \uATM more scalable, for example, by finding a way to approximately solve \cref{eq:PRobust}, the current computational bottleneck.
Moreover, we will explore more general (robust) active-measure environments with partial or noisy measurements.

\section*{Acknowledgments}
This research has been partially funded by NWO grant NWA.1160.18.238 (PrimaVera) and the ERC Starting Grant 101077178 (DEUCE).

\bibliography{sources.bib}

\clearpage

\appendix

\section{Extended Empirical Analysis}
\label{sec:appendixResults}

In this section, we give a more in-depth explanation and analysis of both our experimental setup and results.
All experiments were run on a laptop with Intel i7-10875H 2.3 GHz processor and 32 GB of RAM.
All code is written in Python and available at \gitLink.

\subsection{Algorithms}

Our experimental analysis focuses on three algorithms: \uATM, \ucATM, and \ATM. 
For the latter two, we require a generic ACNO-MDP for planning, which we vary in our experiments.
In particular, we define a \emph{pessimistic, optimistic} and \emph{average} ACNO-MDP, as follows:

\begin{itemize}
    \item \emph{Pessimistic} (pes): the worst-case environment within $\mathcal{M}$ assuming full observability, i.e. $\mathcal{M}_\text{RMDP}$.    
    \item \emph{Average} (avg): the environment where each transition probability is the average of the highest and lowest valid probability in $\mathcal{M}$.
    Its transition function is defined as:
    \begin{equation}
    \begin{aligned}
        \forall s, s', a: & \mathcal{P}(s'|s,a) = [p_\text{min}, p_\text{max}] \\
        & \rightarrow P(s'| s,a) = (p_\text{min} + p_\text{max})/2 .
    \end{aligned}
    \end{equation}
    Note that this is not always a valid transition function but is for all our experiments, and is otherwise easy to normalize.    
    \item \emph{Optimistic} (opt): the best-case environment within $\mathcal{M}$ assuming full observability.
    We define it in a similar fashion as the robust transition function (\cref{eq:PR_uACNO,eq:VR_uACNO}) but we choose a transition function that maximizes returns instead of minimizing them:
    \begin{equation}
    \begin{aligned}
         & P_\text{opt} (s'|s,\tilde{a}) = \arg \sup_{P_\text{opt}(\cdot|s,\tilde{a}) \in \mathcal{P}(\cdot|s,a)} V_\text{opt}(s), \\
        & V_\text{opt}(s) = \max_{a \in A} R(s, a) + \gamma \sum_{s' \in S} P_\text{opt}(s'|s,a) V_\text{opt}(s').
    \end{aligned}
    \end{equation}
\end{itemize}

\subsection{Setup and Optimal Policies for Behavioral Experiments}

\begin{figure}
    \centering
    \begin{minipage}{.5\columnwidth}
        \resizebox{1\linewidth}{!}{\begin{tikzpicture}[
    shorten >=1pt,
    node distance=0.5cm and 2.75cm,
    on grid,
    >={Stealth[round]},
    every state/.style={fill=InkBlue}
    ]

    \def\distR{1.5cm}
    \node[tightS, fill=InkGrey]  (s_0)                                         {$s_0$};
    \node[tightP]    (p0)  [right= 1cm of s_0] {};
    \node[tightS, fill=InkGrey]          (s_-) [above right = of s_0]                      {$s_{-}$};
    \node[tightS, fill=InkGrey]          (s_+) [below right = of s_0]                      {$s_{+}$};
    
    \node[tightR, shape=diamond, fill=InkRed, inner sep=0 ]        (R-) [above right = 0.5cm and \distR of s_-]  {${-}1$};
    \node[tightR, fill=InkYellow]           (R5)  [below right = 0.5cm and \distR of s_-]  {$0$};
    \node[tightR, shape=diamond, fill=InkGreen ]        (R+) [below right = 0.5cm and \distR of s_+]  {$1$};

  \path[->] (s_0)   edge node [above] {a}      (p0)
            (p0)    edge node [above, yshift=0.1cm, xshift=-0.2cm]  {$p {\in} [0,p_{\text{max}}]$} (s_-)
            (p0)    edge node [below, xshift=-0.2cm]  {$1{-}p$} (s_+)
            (s_-)   edge node [above] {a} (R-)
            (s_-)   edge node [below] { } (R5)
            (s_+)   edge node [above] {b} (R5)
            (s_+)   edge node [below] {a} (R+)
            ;
\end{tikzpicture}}
    \end{minipage}
    \caption{A variant of the \uMVenv environment, as used in our experimental analysis.}
    \label{fig:uMVEnv_param}
\end{figure}
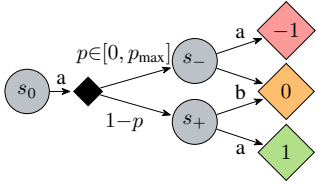

\begin{figure}
    \centering
    \includegraphics[width=0.8\columnwidth]{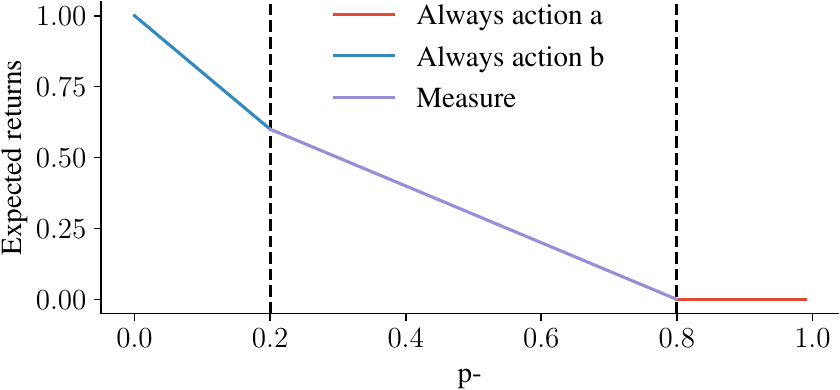}
    \caption{Expected returns for the optimal policy of the \uMVenv environment against $p_-$, for measuring cost $s=0.2$. Colours represent different strategies.}
    \label{fig:uMVline}
\end{figure}

To start, let us describe in more detail our experimental setup for our testing on both the \uMVtwoenv and \uMVenv environments.

For the \uMVtwoenv environment, we test the algorithms for different measuring costs $c$.
To find the optimal policy for this environment, we first notice that in $s_0$ only one action is permitted, and measuring in the second step is never optimal.
Thus, we only need to analyze the returns for measuring and not measuring in $s_0$ for each possible action in the second step.
When measuring, we notice that $s_+$ has a higher expected return than $s_-$.
Thus, in this case, the worst-case transition function deterministically brings us to $s_-$, meaning $Q(s_0, \langle a, 0 \rangle) = 0.8$.
When not measuring, the worst-case transition function should be such that the expected value for actions $a$ and $b$ is equal.
These values are given by $0.8p$ and $(1-p)$ respectively, meaning the worst-case transition probability is given by $p=1/(1+0.8)$ and the expected return is given by $Q(s_0, \langle a, 1 \rangle) =0.8/(1+0.8)$.
Combining the two expected returns, we find measuring value is given by $\mvr = 0.8 (1- \frac{1}{1+0.8}) - c = 0.3\bar{5} -c$.

Next, for the \uMVenv environment, we firstly make a slight alteration by setting the worst- and best-case outcomes to $\pm 1$ instead of $\pm \infty$ (see \cref{fig:uMVEnv_param}).
We choose measuring cost $c=0.2$, and vary $p_\text{max}$.
As mentioned earlier, the \uMVenv environment has the interesting property that measuring becomes sub-optimal for large enough $p_\text{max}$.
To determine for which interval measuring is optimal, we first note (like before) that measuring in the second step is never optimal.
Depending on $p_{\text{max}}$, then, there are three optimal strategies for this environment.
If $p_{\text{max}}$ is sufficiently large or small, the state uncertainty is small and measuring in the second step is not worth the cost.
In these cases, we get strategies that never measure and take only action $a$ or $b$, which lead to expected returns of $-p + (1-p) = 1 -2p$ and $0$, respectively.
Another strategy is to always measure the second step and decide the next action based on our observation: take action $a$ in $s_+$, and $b$ in $s_-$
This yields an expected return of $(1-p) -c$.
An optimal policy, then, simply chooses the strategy with the highest return according to $p_{\text{max}}$ and $c$.
For $c=0.2$, \cref{fig:uMVline} shows the returns of this policy for different probabilities $p_{\text{max}}$.

\subsection{Stochastic Measuring by Measurement Lenient Policies.}

In the \uMVtwoenv and \uMVenv environments, we see the control-robust policies do not follow optimal measuring behavior but instead keep making measurements even after this becomes suboptimal, which is expected.
Surprisingly, however, the algorithms show \emph{inconsistent} measuring behavior, i.e. measure only for $50\%$ of episodes, at certain measuring costs.
To explain why this happens, we focus on the \uMVtwoenv environment.
We first notice that when not measuring in the first step, the expected returns for taking action $a$ or $b$ in the second step are identical.
In such cases, we have implemented our algorithm to randomly pick one of the two actions to calculate $\mvr$.
However, in $\mathcal{M}_\CR$, these actions may not have the same value, which might give a higher measuring value for one action than the other.
In the pessimistic variant, for example, $p{=}1$, meaning action \ucATMpes will measure only if action $b$ is chosen.
This causes sporadic measuring, which explains the behavior seen in our results.
Similar behavior occurs with the two other measurement lenient variants and on the \uMVenv environment.
If undesirable, this behavior could be prevented by choosing control actions based on $\mathcal{M}_\CR$ in case of ties.

\subsection{Drone Environment}

\begin{figure}[tb]
    \centering
    \includegraphics[width=0.75\columnwidth]{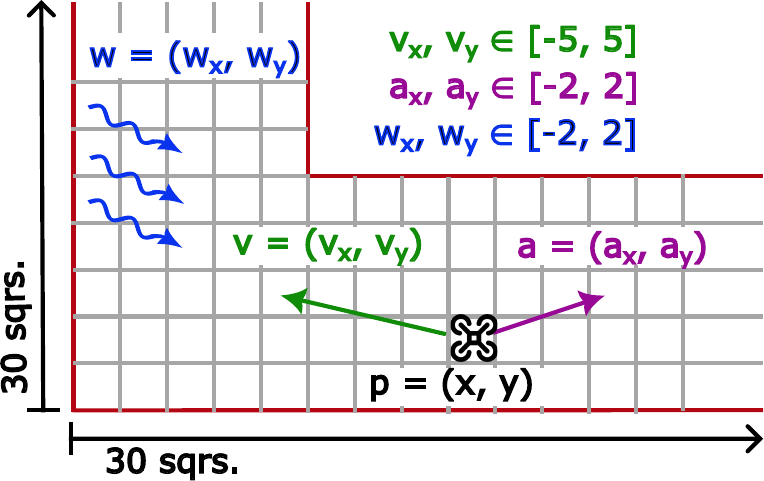}
    \caption{Visualisation of the \drone environment. 
    A drone plans a path through a 2D grid world, where states consist of both its current position and velocity.
    Uncertainty is added via stochastic (wind) disturbances.}
    \label{fig:DroneEnv}
\end{figure}

In this section, we describe in detail the \drone environment used in our experimental analysis: the results themselves are given in the next section.
A visualization of the environment is shown in \cref{fig:DroneEnv}.
We note that the abstraction is meant as an easy-to-understand representation rather than a robust abstraction of a real problem.
For more formal methods, see e.g. \citet{DBLP:journals/jair/BadingsRAPPSJ23}.

\subsubsection{Discretized Dynamics}
To start, let us define a simple generic framework to represent continuous movement using discrete variables.
Firstly, we represent the drone's position as coordinates in a grid, meaning the drone's position can be expressed as $p=\langle  x,y\rangle$, with $x,y \in \mathbb{Z}$ grid coordinates.
For simplicity, we'll assume independence in dynamics between these two directions.
With this assumption, we define velocity as $v=\langle v_x,v_y\rangle \in \mathbb{Z}^2$, which represents the number of grid cells moved each timestep.
Lastly, we define acceleration $a=\langle a_x,a_y\rangle \in \mathbb{Z}^2$ and perturbations $w=\langle w_x,w_y \rangle \in \mathbb{Z}^2$, which influence the change in velocity for each time-step.
The (discrete-time) dynamics of our system are described as follows:

\begin{equation}
\label{eq:DroneDynamics}
\begin{aligned}
    x_t &{=} x_{t-1} + \lfloor \frac{v_{x,t-1} + v_{x,t}}{2} \rfloor, &
    v_{x,t} &{=} v_{x,t-1} + a_{x,t} + w_{x,t}, \\
    y_t &{=} y_{t-1} + \lfloor \frac{v_{y,t-1} + v_{y,t}}{2} \rfloor, &
    v_{y,t} &{=} v_{y,t-1} + a_{y,t} + w_{y,t}.
\end{aligned}   
\end{equation}

For the positions variables $i\in \{x,y\}$, we note that $\lfloor \frac{v_{i,t-1} + v_{i,t}}{2} \rfloor$ represents the average velocity during timestep $t$, which could be a fraction and thus needs to be rounded.

This simple framework can be used to express an MDP with states of form $s = \langle p, v \rangle = \langle x,y,v_x,v_y \rangle$, actions of form $a = \langle a_x, a_y \rangle$, any reward function $R$, and a transition function $P_{W}$ following \cref{eq:DroneDynamics} for a given probability distribution $W:\Delta (\mathbb{Z})$ over perturbations $w$.

\subsubsection{A Drone in a Corridor}

To make these general dynamics concrete for our environment, we first represent our corridor by two overlapping rectangles of $6 \times 30$ squares \footnotemark, with all values outside this area expressed as one sink-state $s_\text{sink}$.
Next, we restrict velocities to be within the range $[-v_\text{max}, v_\text{max}] = [-5,5]$, with any value outside this range set to the closest valid value.
In a similar fashion, we restrict accelerations to be within the range $[-a_\text{max}, a_\text{max}] = [-2,2]$, giving us a finite action space.
We choose a probability distribution $W$ based loosely on a Gaussian:

\footnotetext{We can imagine these grid cells have length $l\approx 20cm$, in which case we have a $1.2m$ wide corridor. 
Further assuming times-steps of $1s$, we get speeds which increment by $0.2m/s$ up to a maximum of $1m/s$, and accelerations up to a maximum of $0.4 m/s^2$.}

\begin{equation}
    w_{i \in \{x,y\}, t} = 
    \begin{cases}
        0 & \text{ with probability } 0.68, \\
        \pm 1 & \text{ with probabilities } 0.14, \\
        \pm 2 & \text{ with probabilities } 0.02.
    \end{cases}
\end{equation}

We define an initial state $s_0 = \langle x{=}29, y{=}2, v_x{=}v_y{=}0 \rangle$ corresponding to a motionless drone at one end of the corridor.
Lastly, we define a simple reward function that yields 1 only if a goal area at the other end of the corridor is reached:

\begin{equation}
    R(s,a) = 
    \begin{cases}
        1 & \text{ if } y > 27 \text{ and } s \neq s_\text{sink},\\
        0 & \text{otherwise.}
    \end{cases}
\end{equation}

Episodes end either if $s = s_\text{sink}$, representing a crash, or the goal area is reached.
The full MDP has a state space $S$ of size $|S| = 39.204$ and action space $A$ of size $|A| = 25$, with each state-action pair having up to 25 successor states.

\subsubsection{From MDP to \RACNO}
To turn our environment into an \RACNO, we start by adding model uncertainty.
Taking inspiration from \citet{DBLP:conf/nips/Osogami12}, we take the MDP defined above and add \emph{confidence intervals} parametrized by a \emph{confidence level} $\alpha \in [0,1]$.
Using this, we define our uncertain transition function as follows:

\begin{equation}
\label{eq:confMDP_uP}
    \mathcal{P}(s'|s,a) = [0, p] \text{, with } p = \min \{\frac{1}{\alpha} P(s'|s,a), 1 \}
\end{equation}

Robust MDPs with such transition functions can be used to deal with risk averseness, which is explained more fully in \citet{DBLP:conf/nips/Osogami12}.
Using this framework, we find both independent directions follow a conditional probability distribution given as:

\begin{equation}
\begin{aligned}
    &\forall i\in \{x,y\}: \text{Pr}(i_t | , i_{t-1}, v_{i,t-1}, a_{i,t-1}) = \\
    &\begin{cases}
        [0, \max (\frac{0.02}{\alpha},1)] & \text{ if } i_t = i_{t-1} + \lfloor v_{i,t-1} + a_{i,t-1} \rfloor \\
        [0, \max (\frac{0.14}{\alpha},1)] & \text{ if } i_t = i_{t-1} + \lfloor v_{i,t-1} + a_{i,t-1} \pm 1 \rfloor \\
        [0, \max (\frac{0.68}{\alpha},1)] & \text{ if } i_t = i_{t-1} + \lfloor v_{i,t-1} + a_{i,t-1} \pm 2 \rfloor
    \end{cases}
\end{aligned}
\end{equation}

To turn this into a \RACNO, we simply add a measuring cost $c$, and all components are defined.

\subsection{Drone Environment Results}

\section{Extended proofs}
\label{sec:appendixProofs}

\subsection{Uncertainty discouraging measuring}

Here, we show that the property shown in the \uMVenv environment generalizes to environments with finite returns and non-vanishing transitions.
Consider \cref{fig:uMVEnv_param}, an environment similar to the \uMVenv environment but where the worst- and best case outcomes are set to $\pm 1$ instead of $\pm \infty$.
In this case, the expected value when measuring is given by $(1-p) - c$, meaning measuring is optimal for $(1-p) \geq c$.
Thus, for any $p_{\text{max}} < c$, measuring will be sub-optimal, even though it would yield positive returns for any $(1-p) \geq c$.

\subsection{Proofing \cref{thm:CRbound}}

In this section, we provide proof for \cref{thm:CRbound}, which has been left out of the paper due to space constraints.
More precisely, we state and prove a more general theorem from which \cref{thm:CRbound} follows.

\begin{theorem}
\label{thm:MoreMeasuringBound}
    Given an \RACNO $\mathcal{M}$ and a set of belief states $\mathcal{B}$.
    Let $\pi$ be any policy such that $b \in \mathcal{B} \implies \exists a: \pi(b) = \langle a, 0 \rangle$, and $\pi'$ a policy defined as follows:
    \begin{equation}
    \label{eq:defPiMoreMeasuring}
        \pi'(b) = 
        \begin{cases}
            \langle a, 1 \rangle & \text{ if } b \in \mathcal{B} \text{ and } \pi(b) = \langle a, 0 \rangle \\
            \pi(b) & \text{ otherwise }
        \end{cases}
    \end{equation}
    Furthermore, let $N_\mathcal{B}(\pi, b_0)$ denote a (possibly infinite) upper limit for the expected number of visits of any belief in $b \in B$ when following a policy $\pi$ from any (initial) belief $b_0$, defined as follows:
    \begin{equation}
    \label{eq:BmDefinition}
        N_\mathcal{B}(\pi, b_0) =
        \begin{cases}
            \lceil \sum_{s \in S} b'_R(s | b_0, \pi(b_0)) N_\mathcal{B} (\pi, s) \rceil + 1 \\ \hspace{15pt} \text{ if } b_0\in \mathcal{B} \text { and } \exists a:  \pi(b_0) = \langle a, 1 \rangle \\
            \lceil \sum_{s \in S} b'_R(s | b_0, \pi(b_0)) N_\mathcal{B} (\pi, s) \rceil \\ \hspace{15pt} \text{ if } b_0 \notin \mathcal{B} \text { and } \exists a: \pi(b_0) = \langle a, 1 \rangle \\
            N_\mathcal{B}(\pi, b'_R(b_0, \pi(b_0))) + 1 \\ \hspace{15pt} \text{ if } b_0 \in \mathcal{B} \text { and } \exists a: \pi(b_0) = \langle a, 0 \rangle\\
            N_\mathcal{B}(\pi, b'_R(b_0, \pi(b_0))) \\ \hspace{15pt} \text{ if } b_0 \notin \mathcal{B} \text { and } \exists a: \pi(b_0) = \langle a, 0 \rangle.
        \end{cases}
    \end{equation}
    Then, the following holds:
    \begin{equation}
        \forall b: V(\pi,b) - V(\pi',b) \leq \sum_{n=0}^{N_\mathcal{B}(\pi',b)} \gamma^n c.
    \end{equation}

\end{theorem}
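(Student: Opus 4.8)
The plan is to bound the gap $V(\pi,b) - V(\pi',b)$ by a Bellman-style inequality on the \emph{difference} of the two value functions and then solve that inequality, reading off the discounted measurement budget directly from the recursion that defines $N_\mathcal{B}$. Writing $\Delta(b) = V(\pi,b) - V(\pi',b)$, I would first record the two structural facts from \cref{eq:defPiMoreMeasuring}: $\pi$ and $\pi'$ agree on every belief outside $\mathcal{B}$, and on beliefs in $\mathcal{B}$ they take the \emph{same} control action, differing only in that $\pi'$ additionally measures. Hence the only primitive source of a gap is the cost $c$ that $\pi'$ pays on each visit to $\mathcal{B}$, and the whole argument reduces to showing that nothing else contributes positively to $\Delta$.

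Concretely, I would establish a one-step recursive inequality by case analysis on $b$. For $b \notin \mathcal{B}$ both policies act identically, so $\Delta(b) \le \gamma \,\E[\Delta(b^{+})]$ with $b^{+}$ ranging over the worst-case successors. For $b \in \mathcal{B}$, expanding both values via the robust recursion of \cref{eq:VR_uACNO} gives $\Delta(b) = c + \gamma\big(V(\pi,b'_\text{R}) - \E_{s'}[V(\pi,\delta_{s'})]\big) + \gamma\,\E_{s'}[\Delta(\delta_{s'})]$, where $s'$ is the state revealed by $\pi'$'s measurement and $b'_\text{R}$ is the non-measuring successor belief of $\pi$. The middle term is exactly the (negated) value of information of one complete, noiseless measurement; if I can show $\E_{s'}[V(\pi,\delta_{s'})] \ge V(\pi,b'_\text{R})$, the case closes to $\Delta(b) \le c + \gamma\,\E_{s'}[\Delta(\delta_{s'})]$. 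Unrolling the two inequalities along trajectories of $\pi'$ then bounds $\Delta(b)$ by the expected discounted sum of the per-visit costs $c$, one term per visit to $\mathcal{B}$.

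It remains to convert this expected discounted cost into the stated closed form and to match it against \cref{eq:BmDefinition}. Since $\gamma^{n}$ is decreasing, any collection of $N$ unit costs incurred at distinct time steps discounts to at most $\sum_{n=0}^{N}\gamma^{n}$; replacing $N$ by the integer upper bound $N_\mathcal{B}(\pi',b)$ on the number of visits (which is where the $\lceil\cdot\rceil$ in the definition is needed) yields $\sum_{n=0}^{N_\mathcal{B}(\pi',b)}\gamma^{n}c$. I would verify that the four branches of $N_\mathcal{B}$ mirror the difference recursion case for case: the $+1$ terms mark exactly the beliefs in $\mathcal{B}$ where $\pi'$ measures and $\pi$ does not, the average $\sum_s b'_\text{R}(s\mid\cdot)\,N_\mathcal{B}(\pi',s)$ tracks the post-measurement Dirac successors, and the non-measuring branch propagates the mixed belief $b'_\text{R}$. \Cref{thm:CRbound} then follows by taking $\mathcal{B} = \{b : \pi'(b)=\langle a,1\rangle,\ \pi(b)=\langle a,0\rangle\}$, for which the bound collapses to $\sum_{n=0}^{\infty}\gamma^{n}c$.

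The main obstacle is the value-of-information inequality $\E_{s'}[V(\pi,\delta_{s'})] \ge V(\pi,b'_\text{R})$, i.e.\ convexity of the value function in the belief: averaging over the Dirac beliefs produced by a noiseless measurement must dominate the value at the averaged non-measuring belief. For a robustly optimal continuation this is the standard POMDP convexity argument, and it is exactly the delicate point for a fixed policy, since a poorly chosen policy could in principle act worse on refined beliefs. I would argue it by exhibiting, from each $\delta_{s'}$, the continuation that \emph{replays} $\pi$'s behaviour rooted at $b'_\text{R}$: by linearity this recovers $V(\pi,b'_\text{R})$ exactly, so whenever $\pi$'s control choices are (robustly) optimal on the revealed state its actual continuation can only improve on the replay. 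Extra care is required because the worst-case transition, and hence $b'_\text{R}$, is belief-dependent and the adversary faced by $\pi$ and $\pi'$ differs; I would resolve this by evaluating both policies against $\pi'$'s worst-case nature $\sigma^{*}$ and using that $\sigma^{*}$ is merely feasible (not worst-case) for $\pi$, so that $V(\pi,b) \le \E_{\sigma^{*}}[\text{return of }\pi]$ supplies exactly the slack the coupling needs.
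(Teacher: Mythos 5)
Your proposal is correct and follows essentially the same route as the paper's proof: a one-step decomposition at beliefs in $\mathcal{B}$ isolating the cost $c$ plus a discounted difference of continuations, the value-of-information inequality $V(\pi, b'_\text{R}(b,\langle a,0\rangle)) \leq V(\pi, b'_\text{R}(b,\langle a,1\rangle))$ to align the successor beliefs, and an unrolling that matches the recursive definition of $N_\mathcal{B}$ branch by branch (the paper phrases this as induction on a finite horizon $H$ followed by a limit, which is the same device as your trajectory unrolling). If anything, you are more careful than the paper at the one delicate point: the paper simply asserts the value-of-information inequality as a ``general inequality,'' whereas you correctly identify that it needs a replay/coupling argument and that the belief-dependence of the adversarial nature must be handled by evaluating both policies against a single feasible nature.
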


\begin{corollary}
    Since measurement lenient policies can be defined from their robust counterparts using \cref{eq:defPiMoreMeasuring}, their performance loss follows the same bound.
\end{corollary}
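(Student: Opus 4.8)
The plan is to show that every measurement lenient policy is already an instance of the construction $\pi'$ appearing in \cref{eq:defPiMoreMeasuring}, so that \cref{thm:MoreMeasuringBound} applies verbatim and no new dynamic-programming argument is needed. Fix the base policy $\pi$ and an arbitrary corresponding measurement lenient policy $\pi_\CR$. The key first step is to choose the set of belief states in \cref{thm:MoreMeasuringBound} to be exactly the set on which $\pi_\CR$ inserts a measurement that $\pi$ omits, namely $\mathcal{B} = \{\, b : \exists a,\ \pi(b) = \langle a, 0 \rangle \text{ and } \pi_\CR(b) = \langle a, 1 \rangle \,\}$. By this definition, $b \in \mathcal{B}$ immediately gives $\pi(b) = \langle a, 0 \rangle$ for some $a$, which is precisely the hypothesis that \cref{thm:MoreMeasuringBound} imposes on $\pi$ relative to $\mathcal{B}$.

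Next I would verify that $\pi_\CR = \pi'$ for this $\mathcal{B}$ by a short case split driven by the two clauses of the definition of measurement leniency. Where $\pi$ measures, clause (1) forces $\pi_\CR(b) = \langle a, 1 \rangle = \pi(b)$, and every such $b$ lies outside $\mathcal{B}$, so $\pi'(b) = \pi(b)$ as well. Where $\pi$ does not measure, clause (2) gives $\pi_\CR(b) = \langle a, m \rangle$ with the \emph{same} control action $a$; if $b \in \mathcal{B}$ then $m = 1$ and the construction sets $\pi'(b) = \langle a, 1 \rangle$, whereas if $b \notin \mathcal{B}$ then $m = 0$ and $\pi'(b) = \pi(b) = \langle a, 0 \rangle$. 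In each case $\pi_\CR(b) = \pi'(b)$ at the level of the full action pair. The point that deserves care—and which I regard as the only real content of the argument—is that both the leniency definition and the $\pi'$ construction keep the control action fixed and toggle only the measuring bit; it is this agreement of control actions, not merely of values, that makes the two policies literally identical rather than just comparable in value.

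With $\pi_\CR = \pi'$ in hand, \cref{thm:MoreMeasuringBound} yields directly $\forall b:\ V(\pi, b) - V(\pi_\CR, b) \leq \sum_{n=0}^{N_\mathcal{B}(\pi_\CR, b)} \gamma^n c$, which is the ``same bound'' asserted by the corollary. To recover \cref{thm:CRbound} I would then relax the possibly finite upper limit $N_\mathcal{B}(\pi_\CR, b)$ to $\infty$: since $c \geq 0$ and $\gamma \in (0,1)$, every summand $\gamma^n c$ is non-negative, so the finite partial sum is dominated by $\sum_{n=0}^{\infty} \gamma^n c$, uniformly in $b$.

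I do not anticipate a genuine obstacle beyond this bookkeeping. It is worth flagging only that \cref{thm:MoreMeasuringBound} allows $\mathcal{B}$ to be an \emph{arbitrary} collection of belief states, so no reachability or regularity assumption on the set where $\pi_\CR$ deviates is required; the equivalence and hence the bound go through even when $\pi_\CR$ adds measurements at an arbitrary subset of the beliefs where $\pi$ abstains.
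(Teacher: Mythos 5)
Your proposal is correct and takes essentially the same route as the paper: the corollary is stated there as an immediate consequence of \cref{thm:MoreMeasuringBound}, and your explicit choice of $\mathcal{B}$ as the set of beliefs where $\pi_\CR$ inserts a measurement, together with the case split verifying $\pi_\CR = \pi'$ (including the agreement of control actions, which the leniency definition indeed guarantees), merely spells out the identification the paper leaves implicit. Your final relaxation of the finite sum to $\sum_{n=0}^{\infty} \gamma^n c$ is exactly how the paper's second corollary recovers \cref{thm:CRbound}.
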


\begin{corollary}
    For any environment where $\mathcal{B}$ or $N_\mathcal{B}$ are not known, we can use the over-estimation given in \cref{thm:CRbound}:
    \begin{equation}
        \forall b: V(\pi,b) - V(\pi',b) \leq \sum_{n=0}^{\infty} \gamma^n c
    \end{equation}
\end{corollary}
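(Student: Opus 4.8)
The plan is to obtain this corollary directly from \cref{thm:MoreMeasuringBound} by a single monotonicity argument, with no new induction required. First I would recall that \cref{thm:MoreMeasuringBound} already gives, for the policy $\pi'$ of \cref{eq:defPiMoreMeasuring} and every belief $b$, the finite bound
\[
    V(\pi,b) - V(\pi',b) \;\leq\; \sum_{n=0}^{N_\mathcal{B}(\pi',b)} \gamma^n c,
\]
where $N_\mathcal{B}(\pi',b) \in \mathbb{N} \cup \{\infty\}$ is the (possibly infinite) upper limit on expected visits to $\mathcal{B}$. The corollary then follows simply by discarding the dependence on the unknown quantity $N_\mathcal{B}(\pi',b)$, which is exactly what is needed when $\mathcal{B}$ or $N_\mathcal{B}$ cannot be computed.

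The key observation is that every summand $\gamma^n c$ is non-negative, since $0 \leq \gamma < 1$ and the measuring cost satisfies $c = C(1) \geq 0$. I would therefore compare the two series term by term: extending the upper summation limit from $N_\mathcal{B}(\pi',b)$ to $\infty$ only appends non-negative terms, so the partial sum is dominated by the full geometric series,
\[
    \sum_{n=0}^{N_\mathcal{B}(\pi',b)} \gamma^n c \;\leq\; \sum_{n=0}^{\infty} \gamma^n c \;=\; \frac{c}{1-\gamma}.
\]
Chaining this with the previous display yields $V(\pi,b) - V(\pi',b) \leq \sum_{n=0}^{\infty} \gamma^n c$ for all $b$, which is the claim. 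The boundary case $N_\mathcal{B}(\pi',b) = \infty$, in which $\mathcal{B}$ may be visited unboundedly often, is immediate since the two sums then coincide.

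There is essentially no hard step here; the only point requiring care is the non-negativity of $c$, which is precisely what makes lengthening the series a valid over-estimate rather than an under-estimate (were $c$ negative, appending terms would decrease the sum and the inequality could reverse). I would thus flag explicitly that the corollary rests on the standing assumption that measurements incur a genuine, non-negative cost, and that the infinite series converges to the finite value $c/(1-\gamma)$ exactly because $\gamma < 1$. Hence the stated bound, though coarser than that of \cref{thm:MoreMeasuringBound}, remains finite and is applicable whenever $\mathcal{B}$ and $N_\mathcal{B}$ are unavailable.
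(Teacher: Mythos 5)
Your proposal is correct and matches the paper's (implicit) justification exactly: the corollary is stated as an immediate consequence of \cref{thm:MoreMeasuringBound}, obtained by replacing the finite sum $\sum_{n=0}^{N_\mathcal{B}(\pi',b)} \gamma^n c$ with the dominating infinite geometric series, which is valid precisely because the appended terms $\gamma^n c$ are non-negative. Your explicit flagging of the standing assumption $c \geq 0$ is a reasonable extra precaution, but there is no substantive difference from the paper's route.
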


\begin{proof}
    Let $V^H$ and $N_\mathcal{B}^H \leq H$ denote $V$ and $N_\mathcal{B}$ for some horizon $H$.
    Then, the following is equivalent to our theorem:
    
    \begin{equation}
        \forall b: \lim_{H \rightarrow \infty} V^H(\pi,b) - V^H(\pi',b) \leq \sum_{n=0}^{ N^H_\mathcal{B}(\pi',b)} \gamma^n c
    \end{equation}
    
    We show this equation holds via induction over $H$.
    As a base case, we note the equation trivially holds for $H=1$, in which case the difference is simply given by $c$.

    For $H>1$, we first note that the equation trivially holds for beliefs where both policies pick the same action pair.
    Thus, we only need to prove the equation holds for beliefs $b \in \mathcal{B}$, which are the only beliefs where both policies pick different measurements.
    For any such belief $b$, denote the control action picked by both policies as $a$, then the difference in finite-horizon value functions is given as:

    \begin{equation}
    \begin{aligned}
    \label{eq:MoreMeasurementProofEq1}
        & V^H (\pi, b) - V^H(\pi', b) \\
        = & \Big(R(b,a) + \gamma V^{H-1}(\pi, b'_R \big( b, \langle a, 0 \rangle \big) \Big) - \\
          & \Big( R(b,a) - c + \gamma V^{H-1}(\pi', b'_R \big( b, \langle a, 1 \rangle \big) \Big) \\
        = & c {+} \gamma \Big( V^{H{-}1}\big(\pi, b'_R ( b, \langle a, 0 \rangle ) \big) {-} V^{H{-}1}(\pi', b'_R \big( b, \langle a, 1 \rangle \big) \Big)
    \end{aligned}
    \end{equation}

    To simplify this, we use the following general inequality:
    
    \begin{equation}
        \forall b,a,\pi: V(\pi, b'_R(b, \langle a, 0 \rangle )) \leq V(\pi, b'_R(b, \langle a, 1 \rangle )) 
    \end{equation}
    
    Using this, we replace $b'_R( b, \langle a, 0 \rangle )$ with $b'_R( b, \langle a, 1 \rangle )$ in the first value function of \cref{eq:MoreMeasurementProofEq1}.    
    Next, we can re-write our value function as a sum over states, as follows:

    \begin{equation}
    \begin{aligned}
    \label{eq:MoreMeasurementProofEq3}
        & V^H(\pi, b) - V^H(\pi', b) \\
        \leq & c + \gamma \sum_{s \in S} b_R'(s' | b, \langle a, 1 \rangle) \Big( V^{H-1}(\pi, s) - V^{H-1}(\pi', s) \Big)
        \\
        \leq & c + \gamma \Big( \sum_{s \in S} b_R'(s' | b, \langle a, 1 \rangle) \sum^{ N_\mathcal{B}^H(\pi',s) }_{n=0} \gamma^n c \Big),
    \end{aligned}
    \end{equation}

    where we obtain the second line by using our induction hypothesis.
    Next, we try finding an upper bound for the bracketed term.
    From the definition of $N_\mathcal{B}$, we obtain the following constraint for measuring for beliefs in $\mathcal{B}$;
    
    \begin{equation}
        \sum_{s \in S} b_R'(s|b,\pi(b)) N^H_\mathcal{B}(\pi,s) \leq N_\mathcal{B}^H(\pi,b) - 1.
    \end{equation}

    For each state $s$, we notice that the contribution to this constraint grows quicker with $N^H_\mathcal{B}(\pi,s)$ than its contribution to the bracketed term in \cref{eq:MoreMeasurementProofEq3}.
    Thus, it achieves its maximum when $N^H_\mathcal{B}(\pi,s)$ is equal for all next states, or more precisely when $\forall s: N_\mathcal{B}^N(\pi',s) = N_\mathcal{B}^N(\pi',b) - 1$.
    We use this as an upper bound to \cref{eq:MoreMeasurementProofEq3}, in which case the sum over all states can be left out to give the following:

    \begin{equation}
    \begin{aligned}
        V^H(\pi, b) - V^H(\pi', b) & \leq c + \gamma \sum_{n=0}^{N_\mathcal{B}^N(\pi',b) - 1} \gamma^n c \\
        & \leq  \sum_{n=0}^{N_\mathcal{B}^N(\pi',b)} \gamma^n c .
    \end{aligned}
    \end{equation}

    This proves our induction step for $H$ and thus the theorem.       
\end{proof}

\end{document}